\documentclass{article}


\usepackage{amsmath,amsfonts,bm}









\def\eqref#1{equation~\ref{#1}}









\def\1{\bm{1}}










\DeclareMathAlphabet{\mathsfit}{\encodingdefault}{\sfdefault}{m}{sl}
\SetMathAlphabet{\mathsfit}{bold}{\encodingdefault}{\sfdefault}{bx}{n}











\newcommand{\E}{\mathbb{E}}

\newcommand{\R}{\mathbb{R}}

\newcommand{\softmax}{\mathrm{softmax}}

\newcommand{\Cov}{\mathrm{Cov}}


\DeclareMathOperator*{\argmin}{arg\,min}

\DeclareMathOperator{\Tr}{Tr}

\usepackage{microtype}
\usepackage{graphicx}
\usepackage{subfigure}
\usepackage{booktabs} 

\usepackage{url}
\usepackage{comment}
\usepackage[normalem]{ulem}
\usepackage{scalerel}

\PassOptionsToPackage{hyphens}{url}\usepackage{hyperref}

\newcommand*{\paral}{\stretchrel*{\parallel}{\perp}}

\newcommand{\En}{\mathcal{E}}



\usepackage[accepted]{icml2025_arxiv}

\usepackage{amsmath}
\usepackage{amssymb}
\usepackage{mathtools}
\usepackage{amsthm}
\usepackage{bbm}

\newtheorem{prop}{Proposition}

\usepackage[capitalize,noabbrev]{cleveref}

\theoremstyle{plain}
\newtheorem{theorem}{Theorem}[section]
\newtheorem{proposition}[theorem]{Proposition}
\newtheorem{lemma}[theorem]{Lemma}

\theoremstyle{definition}

\theoremstyle{remark}

\usepackage[textsize=tiny]{todonotes}


\icmltitlerunning{In-context denoising with one-layer transformers: connections between attention and associative memory retrieval}

\begin{document}

\twocolumn[
\icmltitle{In-context denoising with one-layer transformers: \\ connections between attention and associative memory retrieval}




\icmlsetsymbol{equal}{*}

\begin{icmlauthorlist}

\icmlauthor{Matthew Smart}{xxx}
\icmlauthor{Alberto Bietti}{yyy}
\icmlauthor{Anirvan M. Sengupta}{yyy,www,zzz}

\end{icmlauthorlist}

%

\icmlaffiliation{xxx}{
    Center for Computational Biology, Flatiron Institute, New York, NY, USA}
\icmlaffiliation{yyy}{
    Center for Computational Mathematics, Flatiron Institute, New York, NY, USA}
    \icmlaffiliation{www}{
    Center for Computational Quantum Physics, Flatiron Institute, New York, NY, USA}
\icmlaffiliation{zzz}{
    Department of Physics and Astronomy, Rutgers University, Piscataway, NJ, USA}
    
\icmlcorrespondingauthor{Matthew Smart}{msmart@flatironinstitute.org}
\icmlcorrespondingauthor{Anirvan M. Sengupta}{anirvans.physics@gmail.com}

\icmlkeywords{attention, in-context learning, denoising, associative memory, Hopfield network, transformers}

\vskip 0.3in
]



\printAffiliationsAndNotice{}  

\begin{abstract}
We introduce in-context denoising, a task that refines the connection between attention-based architectures and dense associative memory (DAM) networks, also known as modern Hopfield networks. Using a Bayesian framework, we show theoretically and empirically that certain restricted denoising problems can be solved optimally even by a single-layer transformer. We demonstrate that a trained attention layer processes each denoising prompt by performing a single gradient descent update on a context-aware DAM energy landscape, where context tokens serve as associative memories and the query token acts as an initial state. This one-step update yields better solutions than exact retrieval of either a context token or a spurious local minimum, providing a concrete example of DAM networks extending beyond the standard retrieval paradigm. Overall, this work solidifies the link between associative memory and attention mechanisms first identified by Ramsauer et al., and demonstrates the relevance of associative memory models in the study of in-context learning.

\end{abstract}

\section{Introduction}
\label{sec:intro}

The transformer architecture \cite{Vaswani2017} has achieved remarkable success across diverse domains, from natural language processing \cite{devlin2019bert,brown2020language,touvron2023llama} 
to computer vision \cite{dosovitskiy2020image}. 
Despite their practical success, understanding the mechanisms behind transformer-based networks remains an open challenge. 
This challenge is exacerbated by the growing scale and complexity of modern large networks. 
Toward addressing this, researchers studying simplified architectures have identified connections between the attention operation that is central to transformers and associative memory models \cite{ramsauer2021iclr}, providing not only an avenue for understanding how such architectures encode and retrieve information but also potentially ways to improve them further.

The most celebrated model for associative memories in systems neuroscience is the so-called Hopfield model \cite{amari1972learning, nakano1972associatron, little1974existence, Hopfield1982}. This model has a capacity to store ``memories" (stable fixed points of a recurrent update rule) proportional to the number of nodes \cite{Hopfield1982, Amit1985}. In the last decade, new energy functions \cite{krotov2016hopfield, demircigil2017} were proposed for dense associative memories with much higher capacities. These energy functions are often referred to as modern Hopfield models. 
\citet{ramsauer2021iclr} pointed out the similarity between the one-step update rule of a certain modern Hopfield network \cite{demircigil2017} and the softmax attention layer of transformers, generating interest in the statistical physics and systems neuroscience communities \cite{krotov2021large,krotov2023new,lucibello2024prl, millidge2022universal}. 
Recent work has extended this concept to improve retrieval by incorporating sparsity \cite{hu2023_r2_sparse, wu2024_r2_stanhop, santos2024_r2, wu2024a_r2}, while others have leveraged associative memory principles to design new energy-based transformer architectures \cite{hoover2023energytransformer}.
However, these extensions and the foundational construction
in~\citet{ramsauer2021iclr} primarily focus on the specific task of exact retrieval (converging to a fixed point), while in practice transformers may tackle many other tasks.

To explore this connection beyond retrieval, we introduce \emph{in-context denoising}, a task that bridges the behavior of trained transformers and associative memory networks through the lens of in-context learning (ICL). In standard ICL, a sequence model is trained to infer an unknown function $g$ from contextual examples, predicting $g(X_{L+1})$ given a sequence of input-output pairs $E = ((X_1, g(X_1)), ..., (X_L, g(X_L)), (X_{L+1},-))$. Crucially, $g$ is implied solely through the context and differs across prompts -- performant models are therefore said to ``learn $g(x)$ in context".
While ICL has been extensively studied in supervised settings \citep{garg2022neurips, bartlett2024jmlr, akyurek2023, reddy2024iclr}, recent work suggests that transformers may internally emulate gradient descent over a context-specific loss function during inference \citep{vonOswald2023mordvintsev, dai2023gptlearnicl, ahn2023transformers}. This general perspective aligns with our findings. 

In this work, we generalize ICL to an unsupervised setting where the prompt consists of $L$ samples from a random distribution and the query is a noise-corrupted sample from the same distribution. This shift allows us to probe how trained transformers internally approximate Bayes optimal inference, while deepening the connection to associative memory models which are prototypical denoisers. 
By setting up this problem in this way, we also attempt to answer a few questions. One concerns the memorization-generalization dilemma in denoising: a Hopfield model's success is usually measured by successful memory recovery, while in-context learning may have to solve a completely new problem. Another question has to do with the number of iterations of the corresponding Hopfield model: why does the \citet{ramsauer2021iclr} correspondence involve only one iteration of Hopfield energy minimization and not many?

\textbf{In summary, our contributions are as follows:}
In Section~\ref{sec:results}, we introduce in-context denoising as a framework for understanding how transformers perform implicit inference beyond memory retrieval. In Section \ref{sec:experiments}, we establish that single-layer transformers with one attention head are expressive enough to optimally solve certain denoising problems. We then empirically demonstrate that standard training from random weights can recover the Bayes optimal predictors. 
The trained attention layers are mapped back to dense associative memory networks in Section \ref{sec:assoc-mem}. Our results refine the general connection pointed out in previous work, offer new mechanistic insights into attention, and provide a concrete example of dense associative memory networks extending beyond the standard memory retrieval paradigm to solve a novel in-context learning task.
  
\section{Problem formulation: In-context denoising}
\label{sec:results}

In this section, we describe our general setup. Recurring common notation is described in Appendix \ref{appendix:notation}.
\subsection{Setup}
Each task corresponds to a distribution $D$ over the probability distribution of data: $p_X\sim D$.
Let $X_1,\cdots,X_{L+1} \overset{\mathrm{iid}}{\sim}  p_X$, define the sampling of the tokens. Let the noise corruption be defined by $\tilde X\sim p_\text{noise}(\cdot|X_{L+1})$. The random sequence $E=(X_1, X_2, ..., X_L, \tilde X)$ are given as ``context" (input) to a sequence model $F(\cdot;\theta)$ which outputs an estimate $\hat X_{L+1}$ of the original $(L+1)$-th token . The task is to minimize the expected loss $\E[l(\hat X_{L+1},X_{L+1})]$ for some loss function $l(\cdot,\cdot)$. Namely, our problem is to find
\begin{equation}
    \min_\theta \E_{p_X\sim D,X_{1:L\!+\!1}\sim p_X^{L\!+\!1},\tilde X\sim p_\text{noise}(\cdot|X_{L\!+\!1}) }[l(F(E,\theta),X_{L\!+\!1})].
\end{equation}

In practice, we choose $\tilde X= X_{L+1} + Z$, a pure token corrupted by the addition of isotropic Gaussian noise $Z \sim \mathcal{N}(0, \sigma_{Z}^2 I_n)$, and our objective function to minimize is the mean squared error (MSE) $\E[||\hat X_{L+1}-X_{L+1}||^2]$. 

In the following subsection, we explain the pure token distributions for three specific tasks. These tasks are of course structured so that a one-layer transformer has the expressivity to capture a solution, which, as $L\to\infty$, provides an optimal solution, in some sense. To that end, we derive Bayes optimal estimators for each of the three tasks, under the assumption that we know the original distribution $p_X$ of pure tokens. In Section \ref{sec:experiments}, we use these estimators as baselines to evaluate the performance of the denoiser $f(E,\theta)$ based on a one-layer transformer trained on finite datasets.

\begin{figure*}[h!]
\centering
\includegraphics[width=0.99\textwidth]{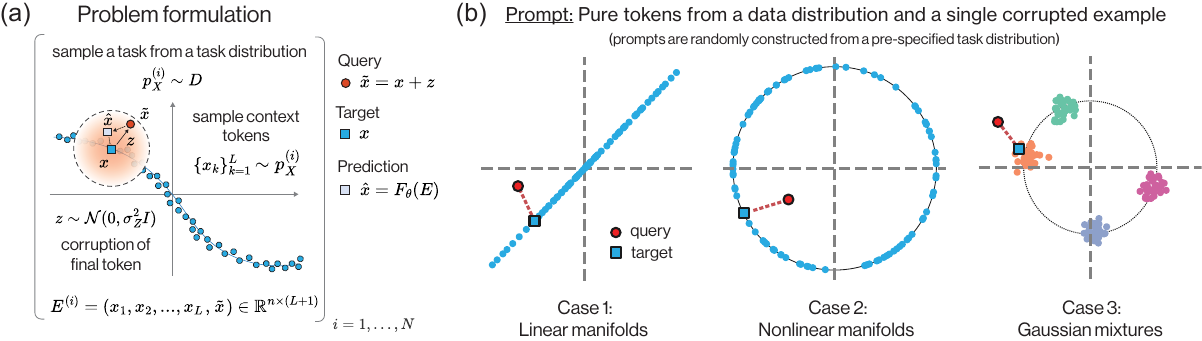}
\caption{
  (a) Problem formulation for a general in-context denoising task. 
  (b) The three denoising tasks considered here include instances of linear and non-linear manifolds as well as Gaussian mixtures. 
  In each case, the task embedding $E^{(i)}$ consists of a sequence of pure tokens from the data distribution $p_X^{(i)} \sim D$ where $D$ denotes the task distribution, along with a single query token that has been corrupted by Gaussian noise. The objective is to predict the target (i.e. \emph{denoise} the query) given information contained only in the prompt.
  }
\label{fig:setup}
\end{figure*}

\subsection{Task-specific token distributions}
We consider three elementary in-context denoising tasks, where the data (vectors in $\mathbb R^n$) comes from:
\begin{enumerate}
  \item Linear manifolds ($d$-dimensional subspaces)
  \item Nonlinear manifolds ($d$-spheres)
  \item Small noise Gaussian mixtures (clusters) where the component means have fixed norm
\end{enumerate}

Below we describe the task-specific distributions $p_X$ and the process for sampling tokens $\{x_t\}$. 
The same corruption process applies to all cases: $\tilde X = X_{L+1} + Z, Z \sim \mathcal{N}(0, \sigma_{Z}^2 I_n)$. 

\subsubsection{Case 1 - Linear manifolds}\label{case1}
A given training prompt consists of pure tokens sampled from a random $d$-dimensional subspace $S$ of $\mathbb{R}^n$. 
\begin{itemize}
  \item Let $P$ be the orthogonal projection operator to a random $d$-dim subspace $S$ of $\mathbb{R}^n$, sampled according to the uniform measure, induced by the Haar measure on the coset space $O(n)/O(n-d)\times O(d)$, on the Grassmanian $G(d,n)$, the manifold of all $d$-dimensional subspaces of $\mathbb{R}^n$. 
  %
  \item Let $Y \sim \mathcal{N}(0, \sigma_0^2 I_n)$ and define $X = P Y$; we use this procedure to construct the starting sequences $(X_1, ..., X_{L+1})$ of $L+1$ independent tokens.
\end{itemize}
We thus have $p_X = \mathcal{N}(0, \sigma_0^2 P)$, with the Haar distribution of $P$ characterizing the task ensemble associated with $D$. 

\subsubsection{Case 2 - Nonlinear manifolds}\label{case2}
We focus on the case of $d$-dimensional spheres of fixed radius $R$ centered at the origin in $\mathbb{R}^n$. 
\begin{itemize}
  \item Choose a random $d+1$-dimensional subspace $V$ of $\mathbb{R}^n$, sampled according to the uniform measure, as before, on the Grassmanian $G(d+1, n)$.
  The choice of this random subspace generates the distribution of tasks $D$.
  %
  \item 
  Inside $V$, sample uniformly from the radius $R$ sphere (once more, a Haar induced measure on a coset space $O(d+1)/O(d)$).
  We use this procedure to construct input sequences $X_{1:L+1}=(x_1, ..., x_{L+1})$ of $L+1$ independent tokens.
\end{itemize}

In practice, we uniformly sample points with fixed norm in $\mathbb{R}^d$ and embed them in $\mathbb{R}^n$ by concatenating zeros. We then rotate the points by selecting a random orthogonal matrix $Q \in \mathbb{R}^{n \times n}$. 

\subsubsection{Case 3 - Gaussian mixtures (Clustering)}
Pure tokens are sampled from a weighted mixture of isotropic Gaussians in $n$-dimensions, 
$\{w_{a}, (\mu_{a}, \sigma_{a}^2)\}_{a=1}^K$. The density is

$$p_X(x) =  
    \sum_{a=1 } ^K  w_{a} C_{a}
        e^{- \lVert x - \mu_{a} \rVert ^2 / 2 \sigma_{a}^2}, $$

where $C_{a} = (2 \pi \sigma_{a}^2)^{-n/2}$ are normalizing constants. 
The $\mu_a$ are independently chosen from a uniform distribution on the radius $R$ sphere of dimension $n-1$, centered around zero. The distribution of tasks $D$, is decided by the choice of $\{\mu_a\}_{a=1}^K$.

For our ideal case, we will consider the limit that the variances go to zero. In that case, the density is simply

$$p_{X_0}(x) =  
    \sum_{a=1 }^K w_{a} \delta(x - \mu_{a}). $$

\subsection{Bayes optimal denoising baselines for each case}
\label{sec:bayes-optimal-predictors}
The first $L$ tokens in $E$ are ``pure samples" from $p$ that should provide information about the distribution for our denoising task. Our performance is expected to be no better than that of the best method, in the case that the token distribution and also the corrupting process are exactly known. This is where the Bayesian optimal baseline comes in.
As is well-known, the Bayes optimal predictor of a quantity is given by the posterior mean. We use that fact to compute the Bayes optimal loss.

 In particular, we seek a function $f: \mathbb{R}^n \rightarrow \mathbb{R}^n$ such that $\E_{X, \tilde X} \left[ \Vert X - f(\tilde X) \rVert^2 \right]$ is minimized. 
Since the perturbation~$Z$ is Gaussian, the posterior distribution of $X$, given $\tilde X$ is
\begin{equation*}
  p_{X \mid \tilde X}(x \mid \tilde x) = C(\tilde x) p_X(x) e ^{  
    - \lVert x - \tilde x \rVert ^2 / 2 \sigma_{Z}^2
  },    
\end{equation*}
 where $C(\tilde x)$ is a normalizing factor (see Appendix \ref{appendix:Bayes-notation} for more explanation). 
The following proposition sets up a baseline to which we expect to compare our results as $L\to\infty$. The proof is in Appendix \ref{appendix:performance-bound}.
\begin{prop}
\label{prop:performance-bound}
For each task, specified by the input distribution $p_X$, and the noise model $p_{\tilde X|X}$,
\begin{equation}
  \E_{X,\tilde X} \left[ \Vert X - f(\tilde X) \rVert^2 \right] 
 \ge   \E_{\tilde X} \left[ \Tr{ \Cov (X \mid \tilde X)} \right].
\end{equation}

This lower bound is met when $f(\tilde X) = \E [ X \mid \tilde X ]$. 
\end{prop}

Thus, the Bayes optimal denoiser is the posterior expectation for $X$ given $\tilde X$. The expected loss is found by computing the posterior sum of variances. 

These optimal denoisers can be computed analytically for both the linear and nonlinear manifold cases (given the variances and dimensionalities). In the Gaussian mixture (clustering) case, it depends on the choice of the centroids which then needs to be averaged over. 

\paragraph{Linear case.}

For the linear denoising task, pure samples $X$ are drawn from an isotropic Gaussian in a restricted subspace. 
The following result provides the Bayes optimal predictor in this case, the proof of which is in Appendix \ref{appendix:Bayes-optimal-linear}. 

\begin{prop}
\label{prop:Bayes-optimal-linear}
For $p_X$ corresponding to Subsection \ref{case1}, the Bayes optimal answer is 
 \begin{equation}
    f_\text{opt}(\tilde X)=\E[X|\tilde X] 
    = \frac{\sigma_{0}^2}{\sigma_{0}^2 + \sigma_{Z}^2} P \tilde X,
 \end{equation}
 and the expected loss is
 \begin{equation}
    \label{eq:linear_predictor_errorval}
    \E \left[ \lVert P \tilde X - X_{L+1} \rVert^2 \right] 
    = d \sigma_{0}^2 \sigma_Z^2 / (\sigma_0^2 + \sigma_{Z}^2).
\end{equation}
\end{prop}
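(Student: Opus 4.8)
The plan is to apply Proposition \ref{prop:performance-bound}: the Bayes optimal denoiser is $f_\text{opt}(\tilde X)=\E[X\mid \tilde X]$, so it suffices to compute this conditional expectation explicitly and then evaluate the resulting expected loss via $\E_{\tilde X}[\Tr\Cov(X\mid\tilde X)]$. The key observation is that for a fixed subspace $S$ with projector $P$, the pure token $X=PY$ with $Y\sim\mathcal N(0,\sigma_0^2 I_n)$ is a (degenerate) Gaussian supported on $S$, and the corruption $\tilde X = X+Z$ with $Z\sim\mathcal N(0,\sigma_Z^2 I_n)$ is jointly Gaussian with $X$. Hence the posterior $p_{X\mid\tilde X}$ is Gaussian and the conditional mean is a linear function of $\tilde X$, which we can obtain either from the standard Gaussian conditioning formula or, more transparently here, by working in a basis adapted to $S$.

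Concretely, I would decompose $\R^n = S \oplus S^\perp$. In the $S^\perp$ directions, $X$ has no component, so the posterior puts all its mass on $X$ having zero component there (the prior is a delta at $0$ in those coordinates), contributing nothing to the conditional mean and nothing to the conditional variance. In the $d$ directions spanning $S$, we have a clean one-dimensional-per-coordinate Gaussian-Gaussian conjugate problem: prior $\mathcal N(0,\sigma_0^2)$ on each coordinate, likelihood $\mathcal N(x,\sigma_Z^2)$ for the observed corrupted coordinate, giving posterior mean $\frac{\sigma_0^2}{\sigma_0^2+\sigma_Z^2}(P\tilde X)$ and posterior variance $\frac{\sigma_0^2\sigma_Z^2}{\sigma_0^2+\sigma_Z^2}$ in each of the $d$ directions. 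This immediately yields $f_\text{opt}(\tilde X)=\frac{\sigma_0^2}{\sigma_0^2+\sigma_Z^2}P\tilde X$. Plugging into Proposition \ref{prop:performance-bound}, $\Tr\Cov(X\mid\tilde X) = d\cdot \frac{\sigma_0^2\sigma_Z^2}{\sigma_0^2+\sigma_Z^2}$, which is independent of $\tilde X$, so the expected loss is exactly $d\sigma_0^2\sigma_Z^2/(\sigma_0^2+\sigma_Z^2)$, matching \eqref{eq:linear_predictor_errorval}. Note the loss expression uses $P\tilde X$ rather than $f_\text{opt}(\tilde X)$ in its statement, which may look odd; I would verify that $\E[\lVert P\tilde X - X_{L+1}\rVert^2]$ indeed equals this value — a short direct computation since $P\tilde X - X = P Z$ has $\E\lVert PZ\rVert^2 = d\sigma_Z^2$... which does \emph{not} match, so in fact the statement must intend $f_\text{opt}(\tilde X)$ on the left, and I would flag/confirm this reading before finalizing.

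There is essentially no serious obstacle here; the only care needed is handling the degeneracy of the prior (it is supported on a lower-dimensional subspace, so one cannot naively write a density on $\R^n$ or invert a singular covariance). The cleanest way around this is the adapted-basis argument above, or equivalently using the pseudoinverse in the Gaussian conditioning formula and checking that the $S^\perp$-block is annihilated; I would present the adapted-basis version as it makes the ``$P$ appears because only the $S$-components are informative'' intuition manifest. The remaining steps — conjugate Gaussian update, trace of the posterior covariance — are routine.
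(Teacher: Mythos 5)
Your proof is correct and takes essentially the same route as the paper's: both reduce to the $S$-component via the projector $P$ and apply the Gaussian--Gaussian conjugate update (the paper completes the square explicitly where you invoke per-coordinate conjugacy), and your loss computation via $\E_{\tilde X}[\Tr\Cov(X\mid\tilde X)]$ is equivalent to the paper's direct expansion using $\tilde X = X+Z$ and independence. Your flag about the left-hand side of \eqref{eq:linear_predictor_errorval} is also correct: the paper's own proof computes $\E\bigl[\lVert X - \tfrac{\sigma_0^2}{\sigma_0^2+\sigma_Z^2}P\tilde X\rVert^2\bigr]$, i.e.\ the loss of $f_\text{opt}(\tilde X)$, so the displayed $P\tilde X$ is a typo (taken literally it would give $d\sigma_Z^2$).
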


\begin{figure}[h!]
\centering
\includegraphics[width=0.38\textwidth]{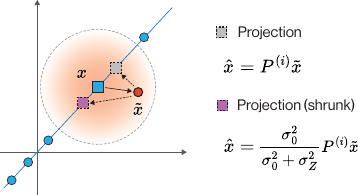}
\caption{
  Baseline estimators for the case of random linear manifolds with projection operator $P^{(i)}$. 
  }
\label{fig:linear-baselines}
\end{figure}

\paragraph{Manifold case.}
In the nonlinear manifold denoising problem, we focus on the case of lower dimensional spheres $S$ (e.g. the circle $S^1 \subset \mathbb{R}^2$). 
For such manifolds, the Bayes optimal answer is given by the following proposition.

\begin{prop}
\label{prop:Bayes-optimal-manifold}
For $p_X$ defined as in Subsection \ref{case2}, with $P$ being the orthogonal projection operator to $V$, the $d+1$ dimensional linear subspace, with $R$ being the radius of sphere $S$, the Bayes optimal answer is
\begin{align}
&f_\text{opt}(\tilde X)=\E [ X \mid \tilde X ]  \nonumber\\
  & = 
  \frac{\int e^{\langle x, \tilde X_{\paral} \rangle / \sigma_{Z}^2} \:x\, d S_x}
       {\int e^{\langle x, \tilde X_{\paral} \rangle / \sigma_{Z}^2} \: d S_x}\\
  &= 
  \frac{I_\frac{d+1}{2} \left(R \frac{\lVert \tilde X_{\paral} \rVert}{\sigma_{Z}^2} \right)}
       {I_\frac{d-1}{2} \left(R \frac{\lVert \tilde X_{\paral} \rVert}{\sigma_{Z}^2} \right)} 
  R \frac{\tilde X_{\paral}}{\lVert \tilde X_{\paral} \rVert},
\end{align}
where $\tilde X_{\paral}=P\tilde X$ and $I_\nu$ is the modified Bessel function of the first kind.
\end{prop}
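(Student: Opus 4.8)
By Proposition~\ref{prop:performance-bound} it suffices to evaluate the posterior mean $\E[X\mid\tilde X=\tilde x]$. Here $p_X$ is the uniform (Haar-induced) probability measure on the radius-$R$ sphere $S\subset V$, so the posterior is supported on $S$ and, by the expression for $p_{X\mid\tilde X}$ stated above, $p_{X\mid\tilde X}(x\mid\tilde x)\propto e^{-\lVert x-\tilde x\rVert^2/2\sigma_Z^2}\,dS_x$. On $S$ the norm $\lVert x\rVert^2=R^2$ is constant, so $-\lVert x-\tilde x\rVert^2 = 2\langle x,\tilde x\rangle - R^2 - \lVert\tilde x\rVert^2$, and the last two terms are absorbed into the normalizing constant; moreover $x\in V$ gives $\langle x,\tilde x\rangle=\langle x,P\tilde x\rangle=\langle x,\tilde x_\paral\rangle$ with $\tilde x_\paral:=P\tilde x$. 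Hence the posterior depends on $\tilde x$ only through $\tilde x_\paral$ and equals $e^{\langle x,\tilde x_\paral\rangle/\sigma_Z^2}\,dS_x$ divided by its normalizing integral, which is exactly the first displayed identity for $f_\text{opt}(\tilde X)$.

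It remains to evaluate this ratio, which is the mean of a von Mises--Fisher distribution on the $d$-sphere. Write $x=Ru$ with $u$ ranging over the unit sphere $S^d$ of $V\cong\R^{d+1}$, and set $\hat v:=\tilde x_\paral/\lVert\tilde x_\paral\rVert$ and $\kappa:=R\lVert\tilde x_\paral\rVert/\sigma_Z^2$, so $\langle x,\tilde x_\paral\rangle/\sigma_Z^2=\kappa\langle u,\hat v\rangle$ and
\[
 \E[X\mid\tilde X=\tilde x]\ =\ R\,\frac{\int_{S^d}e^{\kappa\langle u,\hat v\rangle}\,u\,du}{\int_{S^d}e^{\kappa\langle u,\hat v\rangle}\,du}.
\]
Introduce the scalar integral $\zeta_0(w):=\int_{S^d}e^{\langle u,w\rangle}\,du$ for $w\in\R^{d+1}$; then the numerator above is $\nabla_w\zeta_0(w)$ evaluated at $w=\kappa\hat v$. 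Since $\zeta_0$ is rotation-invariant, $\zeta_0(w)=\zeta(\lVert w\rVert)$ for a scalar function $\zeta$, so $\nabla_w\zeta_0(w)=\zeta'(\lVert w\rVert)\,w/\lVert w\rVert$, and the ratio collapses to $\bigl(\zeta'(\kappa)/\zeta(\kappa)\bigr)\hat v$. In particular the posterior mean is a scalar multiple of $\tilde x_\paral/\lVert\tilde x_\paral\rVert$, as symmetry demands.

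Finally one computes $\zeta$ by slicing $S^d$ into the latitude spheres $\{u:\langle u,\hat v\rangle=t\}$, each a copy of $S^{d-1}$ of radius $\sqrt{1-t^2}$: this gives $\zeta(\kappa)=\omega_{d-1}\int_{-1}^1 e^{\kappa t}(1-t^2)^{(d-2)/2}\,dt$, where $\omega_{d-1}$ is the surface area of the unit $S^{d-1}$. This is exactly (a constant times) Poisson's integral representation of the modified Bessel function, so $\zeta(\kappa)=c_d\,\kappa^{-(d-1)/2}\,I_{(d-1)/2}(\kappa)$ for a dimensional constant $c_d$. Applying the identity $\tfrac{d}{dz}\bigl[z^{-\nu}I_\nu(z)\bigr]=z^{-\nu}I_{\nu+1}(z)$ with $\nu=(d-1)/2$ gives $\zeta'(\kappa)=c_d\,\kappa^{-(d-1)/2}\,I_{(d+1)/2}(\kappa)$, hence $\zeta'(\kappa)/\zeta(\kappa)=I_{(d+1)/2}(\kappa)/I_{(d-1)/2}(\kappa)$. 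Combining the three displays yields $\E[X\mid\tilde X]=R\,\dfrac{I_{(d+1)/2}(\kappa)}{I_{(d-1)/2}(\kappa)}\,\dfrac{\tilde x_\paral}{\lVert\tilde x_\paral\rVert}$ with $\kappa=R\lVert\tilde x_\paral\rVert/\sigma_Z^2$, which is the claimed formula. (On the null event $\tilde x_\paral=0$ the posterior is uniform on $S$ with mean $0$, consistent with $I_{(d+1)/2}(0)=0$.) The only genuine hazard is the Bessel-index bookkeeping: getting the Poisson-representation constants and the half-integer orders right, where the identity $\tfrac{d}{dz}[z^{-\nu}I_\nu]=z^{-\nu}I_{\nu+1}$ is precisely what makes the ratio telescope; it is worth sanity-checking against $d=1$ ($S^1\subset\R^2$), where the answer reduces to $I_1(\kappa)/I_0(\kappa)$.
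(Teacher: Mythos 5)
Your proof is correct and follows essentially the same route as the paper's: reduce the exponent to $\langle x,\tilde x_{\paral}\rangle/\sigma_Z^2$ using the constant norm on the sphere, slice the sphere by latitude with respect to $\tilde x_{\paral}$, and identify the resulting one-dimensional integrals with modified Bessel functions via Poisson's integral representation. Your packaging of the numerator as $\nabla_w\zeta_0$ together with the recurrence $\tfrac{d}{dz}\bigl[z^{-\nu}I_\nu(z)\bigr]=z^{-\nu}I_{\nu+1}(z)$ is a slightly tidier way to obtain the $I_{(d+1)/2}/I_{(d-1)/2}$ ratio than the paper's direct appeal to tabulated integral identities, but it is the same computation.
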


\paragraph{Clustering case.}
For clustering with isotropic Gaussian mixtures 
$\{w_{a}, (\mu_{a}, \sigma_{a}^2)\}_{a=1}^p$, the Bayes optimal predictors for some important special cases are as follows. See Appendix \ref{appendix:bayes-case3} for the general case.
\begin{prop}
\label{prop:bayes-case3}
For general isotropic Gaussian model with $\sigma_a=\sigma_0, ||\mu_a||=R$ for all $a=1,\ldots,K$.
 \begin{align}
    &f_\text{opt}(\tilde X)=\E[X|\tilde X] \nonumber\\
    &= \frac{\sigma_{0}^2}{\sigma_{0}^2 + \sigma_{Z}^2} \tilde X 
    +\frac{\sigma_{Z}^2}{\sigma_{0}^2 + \sigma_{Z}^2} \frac
    {\sum_{a } w_a
         e^{\langle \mu_{a}, \tilde X \rangle /(\sigma_0^2 +\sigma_{Z}^2)}
        \:\: \mu_{a}
        }
    {\sum_{a }w_a
         e^{\langle \mu_{a}, \tilde X \rangle / (\sigma_0^2 +\sigma_{Z}^2)}}.
 \end{align}
If $\sigma_0\to 0$,
\begin{equation}
\label{eq:bayes-case3-zerolimit}
 f_\text{opt}(\tilde X)=\E [X \mid \tilde X]  = \frac
    {\sum_{a } w_a
         e^{\langle \mu_{a}, \tilde X \rangle / \sigma_{Z}^2}
        \:\: \mu_{a}
        }
    {\sum_{a }w_a
         e^{\langle \mu_{a}, \tilde X \rangle / \sigma_{Z}^2}}. 
\end{equation}         
\end{prop}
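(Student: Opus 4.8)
The plan is to start from Proposition~\ref{prop:performance-bound}, which identifies the Bayes optimal denoiser as $f_{\text{opt}}(\tilde X) = \E[X \mid \tilde X]$, and then to evaluate this posterior mean explicitly using the posterior density recorded just before that proposition, $p_{X\mid\tilde X}(x\mid\tilde x) = C(\tilde x)\, p_X(x)\, e^{-\lVert x-\tilde x\rVert^2/2\sigma_Z^2}$. Substituting the mixture density $p_X(x) = \sum_a w_a C_a e^{-\lVert x-\mu_a\rVert^2/2\sigma_0^2}$ turns both the numerator $\int x\, p_X(x)\, e^{-\lVert x-\tilde x\rVert^2/2\sigma_Z^2}\,dx$ and the normalizing integral $\int p_X(x)\, e^{-\lVert x-\tilde x\rVert^2/2\sigma_Z^2}\,dx$ into finite sums over $a$ of elementary Gaussian integrals.

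The key step is the standard product-of-Gaussians identity: completing the square in $x$ gives
\[
\frac{\lVert x-\mu_a\rVert^2}{2\sigma_0^2} + \frac{\lVert x-\tilde x\rVert^2}{2\sigma_Z^2} = \frac{\lVert x - m_a\rVert^2}{2\sigma_*^2} + \frac{\lVert \mu_a-\tilde x\rVert^2}{2(\sigma_0^2+\sigma_Z^2)},
\]
with $\sigma_*^2 = \sigma_0^2\sigma_Z^2/(\sigma_0^2+\sigma_Z^2)$ and $m_a = (\sigma_0^2\tilde x + \sigma_Z^2\mu_a)/(\sigma_0^2+\sigma_Z^2)$. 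The $a$-th integral then evaluates to $(2\pi\sigma_*^2)^{n/2}\, e^{-\lVert \mu_a-\tilde x\rVert^2/2(\sigma_0^2+\sigma_Z^2)}$ for the normalizer and to the same quantity times $m_a$ for the numerator, so that $\E[X\mid\tilde X] = \big(\sum_a w_a C_a (2\pi\sigma_*^2)^{n/2} e^{-\lVert\mu_a-\tilde x\rVert^2/2(\sigma_0^2+\sigma_Z^2)} m_a\big)\big/\big(\sum_a w_a C_a (2\pi\sigma_*^2)^{n/2} e^{-\lVert\mu_a-\tilde x\rVert^2/2(\sigma_0^2+\sigma_Z^2)}\big)$.

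Next come the cancellations that collapse this to the stated form. Since $\sigma_a = \sigma_0$ for all $a$, the prefactor $C_a (2\pi\sigma_*^2)^{n/2}$ is independent of $a$ and cancels between numerator and denominator. Expanding $\lVert\mu_a-\tilde x\rVert^2 = \lVert\mu_a\rVert^2 - 2\langle\mu_a,\tilde x\rangle + \lVert\tilde x\rVert^2$ and using $\lVert\mu_a\rVert = R$, the factor $e^{-(R^2+\lVert\tilde x\rVert^2)/2(\sigma_0^2+\sigma_Z^2)}$ is also $a$-independent and cancels, leaving $e^{\langle\mu_a,\tilde x\rangle/(\sigma_0^2+\sigma_Z^2)}$ as the only surviving weight. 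Finally, writing $m_a = \frac{\sigma_0^2}{\sigma_0^2+\sigma_Z^2}\tilde x + \frac{\sigma_Z^2}{\sigma_0^2+\sigma_Z^2}\mu_a$ and pulling the $\tilde x$ term out of the now-normalized weighted average gives exactly the claimed expression. The $\sigma_0\to 0$ case follows by taking limits in the coefficients (or, more directly, by feeding $p_{X_0}(x) = \sum_a w_a\delta(x-\mu_a)$ into the posterior-mean formula, which replaces the Gaussian integrals by point evaluations at $x=\mu_a$ immediately).

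I do not expect a serious obstacle here: the computation is elementary Gaussian calculus. The one point that must be handled carefully is that both cancellations genuinely use the hypotheses $\sigma_a = \sigma_0$ and $\lVert\mu_a\rVert = R$; without the equal-variance assumption the factor $C_a\sigma_*^n$ depends on $a$, and without the fixed-norm assumption the $\lVert\mu_a\rVert^2$ term survives in the exponent — in which case one obtains the more general (and messier) expression deferred to Appendix~\ref{appendix:bayes-case3}.
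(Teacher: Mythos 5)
Your proposal is correct and follows essentially the same route as the paper's proof in Appendix \ref{appendix:bayes-case3}: complete the square via the product-of-Gaussians identity to obtain the component means $m_a=(\sigma_0^2\tilde x+\sigma_Z^2\mu_a)/(\sigma_0^2+\sigma_Z^2)$ and the residual exponent $\lVert\mu_a-\tilde x\rVert^2/2(\sigma_0^2+\sigma_Z^2)$, then cancel the $a$-independent prefactors using $\sigma_a=\sigma_0$ and $\lVert\mu_a\rVert=R$, and take $\sigma_0\to 0$ for the limiting form. Your explicit remark about exactly where the two hypotheses enter matches the paper's treatment of the general (unequal-variance, unequal-norm) case.
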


In all three cases, we notice similarities between the form of the Bayes optimal predictor, and attention operations in transformers, a connection which we explore below.

\section{In-context denoising with one-layer transformers -- Empirical results}
\label{sec:experiments}

In this section, we provide simple constructions of one-layer transformers that approximate (and under certain conditions, exactly match) the Bayes optimal predictors above.

\textbf{Input:}
Let $p_X^{(1)},\ldots, p_X^{(N)}\overset{\mathrm{iid}}{\sim}D$, be distributions sampled for one of the tasks. For each distribution $p_X^{(i)}$, we sample $E^{(i)}:=(X_1^{(i)},\ldots,X_L^{(i)},\tilde X^{(i)})$ taking value in $\mathbb{R}^{n \times (L+1)}$ be an input to a sequence model. We also retain the true $(L+1)$-th token $X_{L+1}^{(i)}$ for each $i$.  

\textbf{Objective:}
Given an input sequence $E^{(i)}$, return the uncorrupted final token $X_{L+1}^{(i)}$. We consider the mean-squared error loss over a collection of $N$ training pairs, $\{E^{(i)}, X_{L+1}^{(i)}\}_{i=1}^{N}$, 
\begin{equation}
\label{eq:cost}
    C(\theta) = \sum_{i=1}^{N} \lVert F(E^{(i)},\theta) - x_{L+1}^{(i)} \rVert^2,
\end{equation}
where $F(E^{(i)},\theta)$ denotes the parametrized function predicting the target final token based on input sequence $E^{(i)}$.

\subsection{One-layer transformer and the attention between the query and pure tokens}

To motivate our choice of architecture, let us start by discussing the linear case. 

There we have $f_\text{opt}(\tilde X)=\tfrac{\sigma_0^2}{\sigma_0^2+\sigma_Z^2}P\tilde X$. Note that, by the strong law of large numbers, $\hat P=
\tfrac{1}{\sigma_0^2L}\sum_{t=1}^LX_tX_t^T$ is a random matrix that almost surely converges component-by-component to the orthogonal projection $P$ as $L\to \infty$, since, for each $t$, $X_tX_t^T$ has the expectation $\sigma_0^2P$ and that $X_t$ is a Gaussian random variable with zero mean and a finite covariance matrix. So we could propose

\begin{equation}
\label{eq:main-case1-sec3}
    f(\tilde X)=\frac{\sigma_0^2}{\sigma_0^2+\sigma_Z^2}\hat P\tilde X=\frac{1}{(\sigma_0^2+\sigma_Z^2)L}\sum_{t=1}^LX_t\langle X_t,\tilde X\rangle.
\end{equation}

\begin{figure*}[h!]
\centering
\includegraphics[width=0.99\textwidth]{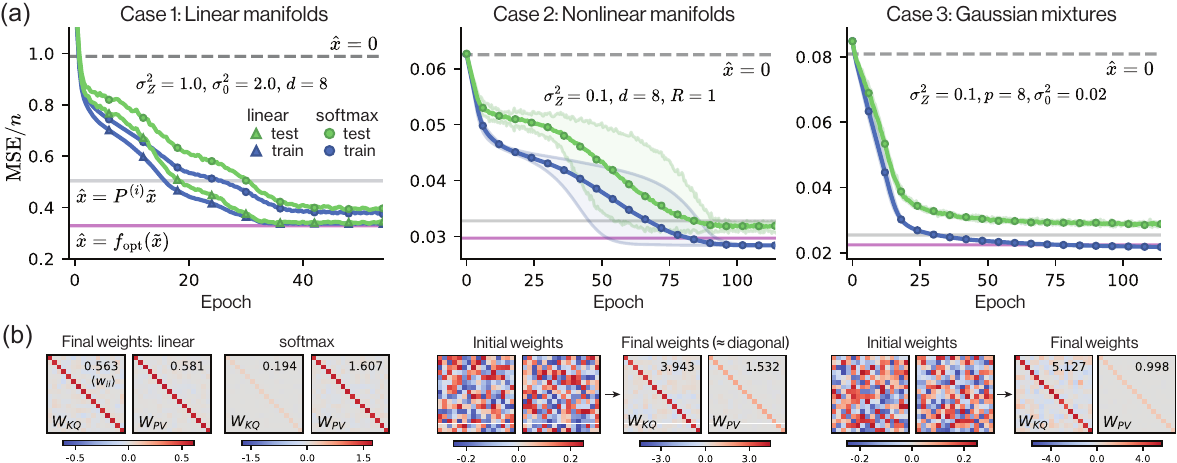}
\caption{
  (a) Training dynamics for the studied cases using one-layer softmax attention (circles) as well as linear attention (triangles). 
  Solid lines represent the average loss over six seeds, with the shaded area indicating the range for cases 2 and 3. 
  For each case, the grey dashed baseline indicates the 0-predictor, and the pink line indicates the Bayes optimal predictor.
  All cases use a context length of $L=500$, ambient dimension $n=16$, and are trained with Adam on a dataset of size 800 with batch size 80 and standard weight initialization $w_{ij} \sim U[-1/\sqrt{n}, 1/\sqrt{n}]$. 
  (b) Final attention weights $W_{KQ}$ and $W_{PV}$ are shown. For each, we indicate the mean of the diagonal elements. Representative initial weights are displayed for the second and third case.
  }
\label{fig:empirical-training-triple}
\end{figure*}

 We now consider a simplified one-layer linear transformer (see Appendices \ref{appendix:self-attention-general} and \ref{appendix:self-attention-denoising} for more detailed discussions) which still has sufficient expressive power to capture our finite sample approximation to the Bayes optimal answer. We define 
\begin{equation}
\label{eq:transformer-linear-attn}
\hat X = F_\textrm{Lin}(E,\theta) := \frac{1}{L} W_{PV} X_{1:L} X_{1:L}^T W_{KQ} \tilde X    
\end{equation}
taking values in $\mathbb{R}^{n}$,
where $X_{1:L}:=[X_1,\ldots,X_L]$ taking values in $\R^{n\times L}$, 
with learnable weights $W_{KQ}, W_{PV} \in \mathbb{R}^{n \times n}$ abbreviated by $\theta$.
Note that, when $W_{PV}=\alpha I_n,W_{KQ}=\beta I_n$, and 
$\alpha\beta=\tfrac{1}{\sigma_0^2+\sigma_Z^2}$, $F(E,\theta)$ should approximate the Bayes optimal answer $f_\text{opt}(\tilde X)$ as $L\to \infty$. For a detailed discussion of the convergence rate, see Appendix~\ref{appendix:convergence-rates}, in general, and Proposition~\ref{prop:linear-case-rate}, in particular.

Similarly, we could argue that the second two problems, the $d$-dimesional spheres and the $\sigma_0\to 0$ zero limit of the Gaussian mixtures could be addressed by  softmax attention
\begin{equation}
\label{eq:transformer-softmax-attn}
  \hat X = F(E,\theta) :=  W_{PV} X_{1:L}\textrm{softmax} (X_{1:L}^T W_{KQ} \tilde X) 
\end{equation}
taking values in $\mathbb{R}^{n }$. 
The function $\textrm{softmax}(z):=\frac{1}{\sum_{i=1}^n e^{z_i}}(e^{z_1}, \ldots, e^{z_n})^T \in \mathbb{R}^n$ is applied column-wise. 

For both problems, namely the spheres and the $\sigma_0\to 0$ Gaussian mixtures, we could have $W_{PV}=\alpha I_n,W_{KQ}=\beta I_n$ with $\alpha=1, \beta=1/\sigma_Z^2$ providing Bayes optimal answers as $L\to\infty$. 

In fact, we could make a more general statement about distributions $p_X$ where the norm of $X$ is fixed.
\begin{theorem}
\label{theorem:convergence}
If we have a task distribution $D$ so that the support of each $p_X$ is the subset of some sphere, centered around the origin, with a $p_X$-dependent radius $R$, then the function 
\begin{equation}
F((\{X_t\}_{t=1}^L,\tilde x),\theta^*)=\frac{\sum_{t=1}^LX_te^{\langle X_t,\tilde x\rangle/\sigma_Z^2}}{\sum_{t=1}^Le^{\langle X_t,\tilde x\rangle/\sigma_Z^2}}
\end{equation}
converges almost surely to the Bayes optimal answer $f_\text{opt}(\tilde x)$ for all $\tilde x\in \R^n$, as $L\to\infty$. The optimal parameter $\theta^*$ refers to $W_{PV}= I_n,W_{KQ}=\tfrac{1}{\sigma_Z^2}I_n$.
\end{theorem}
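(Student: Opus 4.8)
The plan is to recognize the displayed function as a self-normalizing empirical average (a ``softmax'' kernel estimator) and to apply the strong law of large numbers to both its numerator and denominator. Fix $\tilde x \in \R^n$ and condition on the draw $p_X \sim D$, so that the radius $R$ is fixed and $X_1, X_2, \ldots \overset{\mathrm{iid}}{\sim} p_X$. First I would verify that the summands are well-behaved: since $\mathrm{supp}(p_X)$ lies on the sphere of radius $R$, we have $\lVert X_t \rVert = R$ almost surely, hence $\langle X_t, \tilde x \rangle \in [-R\lVert \tilde x\rVert, R\lVert\tilde x\rVert]$ is bounded, so $e^{\langle X_t, \tilde x\rangle/\sigma_Z^2}$ and $X_t e^{\langle X_t,\tilde x\rangle/\sigma_Z^2}$ are bounded random variables (in particular integrable).

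\textbf{Key steps.} Apply the SLLN componentwise: almost surely,
\begin{equation*}
\frac{1}{L}\sum_{t=1}^L e^{\langle X_t,\tilde x\rangle/\sigma_Z^2} \;\longrightarrow\; \E\!\left[e^{\langle X,\tilde x\rangle/\sigma_Z^2}\right] \;=:\; Z(\tilde x),
\end{equation*}
and similarly $\frac{1}{L}\sum_{t=1}^L X_t e^{\langle X_t,\tilde x\rangle/\sigma_Z^2} \to \E[X e^{\langle X,\tilde x\rangle/\sigma_Z^2}]$ coordinate by coordinate. The denominator limit $Z(\tilde x)$ is strictly positive (it is the expectation of a positive random variable), so the ratio of the empirical averages converges almost surely to
\begin{equation*}
\frac{\E\!\left[X\, e^{\langle X,\tilde x\rangle/\sigma_Z^2}\right]}{\E\!\left[e^{\langle X,\tilde x\rangle/\sigma_Z^2}\right]}.
\end{equation*}
Multiplying numerator and denominator by the common factor $p_X$ gives (heuristically) $\int x\, p_X(x) e^{\langle x,\tilde x\rangle/\sigma_Z^2}\,dx \big/ \int p_X(x) e^{\langle x,\tilde x\rangle/\sigma_Z^2}\,dx$. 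The last step is to identify this with $\E[X\mid \tilde X = \tilde x]$: by the posterior formula $p_{X\mid\tilde X}(x\mid\tilde x) = C(\tilde x) p_X(x) e^{-\lVert x-\tilde x\rVert^2/2\sigma_Z^2}$, and expanding $\lVert x - \tilde x\rVert^2 = \lVert x\rVert^2 - 2\langle x,\tilde x\rangle + \lVert\tilde x\rVert^2$, the factor $e^{-\lVert x\rVert^2/2\sigma_Z^2} = e^{-R^2/2\sigma_Z^2}$ is constant on $\mathrm{supp}(p_X)$ and $e^{-\lVert\tilde x\rVert^2/2\sigma_Z^2}$ does not depend on $x$; both are absorbed into the normalization, leaving exactly the weighting $p_X(x)e^{\langle x,\tilde x\rangle/\sigma_Z^2}$. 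Hence the limit equals $\E[X\mid\tilde X=\tilde x] = f_\text{opt}(\tilde x)$ by Proposition~\ref{prop:performance-bound}. Finally, since $\theta^*$ with $W_{PV}=I_n$, $W_{KQ}=\tfrac{1}{\sigma_Z^2}I_n$ makes $F((\{X_t\},\tilde x),\theta^*)$ precisely the stated ratio (the softmax normalization over $t$ cancels the $1/L$'s), this establishes the claim.

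\textbf{Main obstacle.} The routine part is the SLLN application — boundedness makes integrability trivial. The one genuine subtlety is handling the quantifier ``for all $\tilde x \in \R^n$'': the SLLN gives, for each fixed $\tilde x$, an almost-sure event, but these events depend on $\tilde x$ and an uncountable union of null sets need not be null. I would address this either by noting that for the purposes of the theorem a pointwise (per-$\tilde x$) statement suffices, or, if uniformity is wanted, by upgrading to a locally uniform statement: the maps $\tilde x \mapsto \frac1L\sum X_t e^{\langle X_t,\tilde x\rangle/\sigma_Z^2}$ are equi-Lipschitz on compact sets (their gradients are bounded by $R^2 e^{R\lVert\tilde x\rVert/\sigma_Z^2}/\sigma_Z^2$ uniformly in $L$ on a ball), so almost-sure convergence on a countable dense set plus equicontinuity promotes to locally uniform almost-sure convergence, and the continuous-mapping/quotient step goes through since $Z(\tilde x)$ is bounded away from zero on compacts. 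A secondary bookkeeping point is the conditioning on $p_X \sim D$: the convergence holds conditionally on almost every task draw, hence unconditionally almost surely, which is what the statement asserts.
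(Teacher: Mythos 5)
Your proposal is correct and follows essentially the same route as the paper's proof: apply the strong law of large numbers separately to the bounded numerator and denominator averages, note the denominator's limit is strictly positive, and identify the resulting ratio with the posterior mean using the fact that $\lVert x\rVert$ is constant on the support. Your additional care about the uncountable quantifier over $\tilde x$ (and the equicontinuity upgrade) goes beyond what the paper records, but it refines rather than replaces the argument.
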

The proof of the theorem is in Appendix \ref{appendix:convergence}.  
See Appendix~\ref{appendix:convergence-rates}, particularly  Proposition~\ref{prop:nonlinear-case-rate}, for consideration of convergence rates.
Note that the condition of $p_X$ being supported on a sphere is not artificial as, in many practical transformers, pre-norm with RMSNorm gives you inputs on the sphere, up to learned diagonal multipliers.

Note that the natural form of attention that is suggested by our formulation of in-context denoising would involve Gaussian kernels:
\begin{equation}
\label{eq:transformer-Gaussian-attn}
  \hat X = F_G(E,\theta) := \frac{\sum_t W_{PV} X_te^{-\frac{1}{2}||W_KX_t-W_Q\tilde X||^2 }}{\sum_t e^{-\frac{1}{2}||W_KX_t-W_Q\tilde X||^2 }}.
\end{equation}
The relation between softmax attention and the Gaussian kernel has been noted in \citep{choromanski2021performer, Ambrogioni2024_diffusion} and a Gaussian kernel-based attention is implemented in \citep{chen2021skyformer}. A related Hopfield energy, with $W_K$, $W_Q$, and $W_{PV}$ proportional to identity matrices, is proposed in \citep{hoover2024dense}.

For the linear case, we use linear attention, but that may not be essential. Informally speaking, the softmax attention model has the capacity to subsume the linear attention model. 
\begin{proposition}
\label{prop:attention-limit}
As $\epsilon\to 0$,
\begin{align}
&F\Big(E,\big(\frac{1}{\epsilon}W_{PV},\epsilon W_{KQ}\big)\Big)=\frac{1}{\epsilon} W_{PV}\bar X\nonumber\\
&+ \frac{1}{L}W_{PV}\sum_{t=1}^LX_t(X_t-\bar X)^TW_{KQ}\tilde X+O(\epsilon),
\end{align}
where $\bar X=\tfrac{1}{L}\sum_{t=1}^LX_t$ is the empirical mean. 
\end{proposition}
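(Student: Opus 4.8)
The plan is to perform a Taylor expansion of the softmax in the regime where its argument is small, which is exactly what the scaling $W_{KQ}\mapsto \epsilon W_{KQ}$ achieves. Recall that with parameters $(\tfrac{1}{\epsilon}W_{PV},\epsilon W_{KQ})$ the softmax-attention map reads
\begin{equation*}
F\Big(E,\big(\tfrac{1}{\epsilon}W_{PV},\epsilon W_{KQ}\big)\Big)=\tfrac{1}{\epsilon}W_{PV}\,X_{1:L}\,\mathrm{softmax}\big(\epsilon\, X_{1:L}^TW_{KQ}\tilde X\big).
\end{equation*}
Write $z_t := \langle X_t, W_{KQ}\tilde X\rangle$, so the argument of the softmax is the vector $\epsilon z\in\R^L$. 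I would first expand the scalar softmax weights: $\mathrm{softmax}(\epsilon z)_t = \tfrac{e^{\epsilon z_t}}{\sum_s e^{\epsilon z_s}} = \tfrac{1+\epsilon z_t + O(\epsilon^2)}{L + \epsilon\sum_s z_s + O(\epsilon^2)} = \tfrac{1}{L}\big(1 + \epsilon(z_t - \bar z) + O(\epsilon^2)\big)$, where $\bar z = \tfrac{1}{L}\sum_s z_s$. (The $\tfrac{1}{L}$ normalization plus the centering $z_t-\bar z$ is the crux; it comes out cleanly from $\tfrac{1}{L+\epsilon\sum_s z_s} = \tfrac{1}{L}(1 - \tfrac{\epsilon}{L}\sum_s z_s + O(\epsilon^2))$.)

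Next I would substitute this back into $\tfrac{1}{\epsilon}W_{PV}X_{1:L}(\cdot)$. The leading term gives $\tfrac{1}{\epsilon}W_{PV}\cdot\tfrac{1}{L}\sum_t X_t = \tfrac{1}{\epsilon}W_{PV}\bar X$, which is the divergent term in the claim. The next-order term gives $W_{PV}\cdot\tfrac{1}{L}\sum_t X_t(z_t-\bar z)$. Now I substitute $z_t = X_t^TW_{KQ}\tilde X$ and $\bar z = \big(\tfrac{1}{L}\sum_s X_s\big)^TW_{KQ}\tilde X = \bar X^TW_{KQ}\tilde X$, so that $z_t - \bar z = (X_t - \bar X)^TW_{KQ}\tilde X$. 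This yields precisely $\tfrac{1}{L}W_{PV}\sum_{t=1}^L X_t(X_t-\bar X)^TW_{KQ}\tilde X$, matching the stated expression. Finally, the remainder: the $O(\epsilon^2)$ error in the softmax weights, after multiplication by $\tfrac{1}{\epsilon}$, contributes $O(\epsilon)$, so the whole thing is $\tfrac{1}{\epsilon}W_{PV}\bar X + \tfrac{1}{L}W_{PV}\sum_t X_t(X_t-\bar X)^TW_{KQ}\tilde X + O(\epsilon)$, as claimed.

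The only real point requiring care — and the place I would be most careful in a full writeup — is controlling the $O(\epsilon^2)$ remainder uniformly enough that, after multiplication by $1/\epsilon$, it is genuinely $O(\epsilon)$ rather than $O(1)$. This is routine since $L$ is fixed and finite and the entries of $z$ are fixed real numbers (the prompt $E$ and weights are held fixed as $\epsilon\to 0$): the map $\epsilon\mapsto\mathrm{softmax}(\epsilon z)$ is smooth (real-analytic) near $\epsilon=0$, so its second-order Taylor remainder is bounded by $C(z)\epsilon^2$ on a neighborhood of $0$, with $C(z)$ depending only on $\|z\|_\infty$. Hence I would simply invoke smoothness of the softmax and Taylor's theorem with explicit first-order expansion rather than tracking constants. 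No deep obstacle here; the statement is essentially a bookkeeping exercise establishing that softmax attention, suitably rescaled, degenerates to (mean term plus) the centered linear-attention map, which is the conceptual content linking \eqref{eq:transformer-linear-attn} and \eqref{eq:transformer-softmax-attn}.
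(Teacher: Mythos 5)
Your proof is correct and follows essentially the same route as the paper: a first-order Taylor expansion of the softmax around zero argument (the paper isolates this as Lemma~\ref{appendix:small-arg}, yielding the same $\tfrac{1}{L}(\mathbbm{1}_L + \beta(v-\bar v\mathbbm{1}) + O(\beta^2))$ form with the centering term), followed by substitution into the attention map. Your additional remark on uniform control of the remainder is sound and slightly more careful than the paper's treatment, but the argument is the same.
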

See Appendix \ref{appendix:expansion-softmax} for the details of small $W_{KQ}$ expansion and Appendix \ref{appendix:attention-limit} for the proof of Proposition \ref{prop:attention-limit}. 

For case 1, note that $\E[X_t]=0$ and covariance of $X_t$ is finite, $E[\bar X]=0$, and $E[||\bar X||^2]=O(\tfrac{1}{L})$, allowing us to drop $\bar X$ as $L\to \infty$. If, in addition, $\epsilon$ is small, only the second term survives. Thus, $F\big(E,(\frac{1}{\epsilon}W_{PV},\epsilon W_{KQ})\big)$ starts to approximate $F_{\text{Lin}}\big(E,(W_{PV},W_{KQ})\big)$ when $L$ is large and $\epsilon$ is small, with $\epsilon\sqrt{L}$ large. 
We therefore could use the softmax model for all three cases.

\begin{figure*}[h]
\centering
\includegraphics[width=0.99\textwidth]{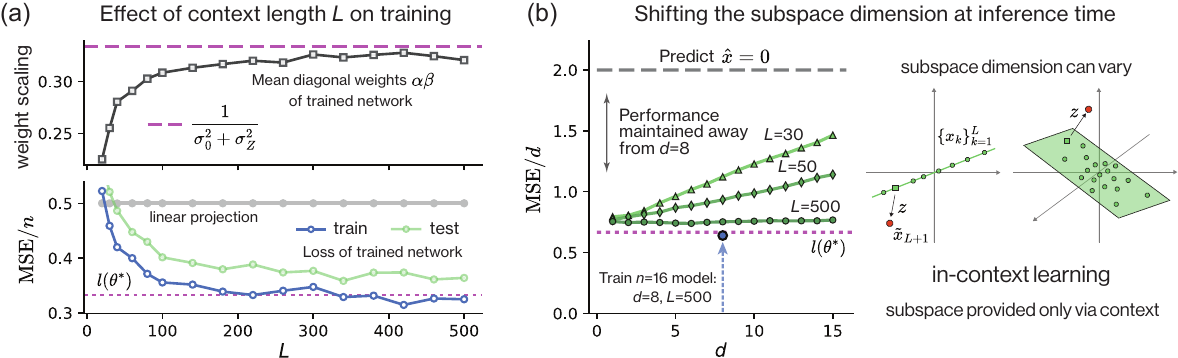}
\caption{
  (a) Trained linear attention network converges to Bayes optimal estimator as context length increases ($n=16$, $d=8$, $\sigma_0^2=2, \sigma_z^2=1$). 
  (b) A network trained to denoise subspaces of dimension $d=8$ can accurately denoise subspaces of different dimensions presented at inference time, given sufficient context.
}
\label{fig:empirical-ICL}
\end{figure*}

\subsection{Case 1 -- Linear manifolds}
\label{subsec:linear-case}
The Bayes optimal predictor for the linear denoising task from Section \ref{sec:bayes-optimal-predictors} suggests that the linear attention weights should be scaled identity matrices with their product satisfying
$\alpha \beta = \frac{1}{\sigma_0^2 + \sigma_Z^2}$. 
Fig. \ref{fig:empirical-training-triple} shows that a one-layer network of size $n=16$ trained on tasks with $\sigma_Z^2=1, \sigma_0^2=2, d=8, L=500$ indeed achieves this bound, training to nearly diagonal weights with the appropriate scale
$\langle w_{KQ}^{(ii)} \rangle \langle w_{PV}^{(ii)} \rangle = 0.327 \approx 1/3$ (similar weights are learned for each seed, up to a sign flip). 

Fig. \ref{fig:empirical-ICL}(a) displays how this bound is approached as the context length $L$ of training samples is increased. In Fig. \ref{fig:empirical-ICL}(b) we study how the performance of a model trained to denoise random subspaces of dimension $d=8$ is affected by shifts in the subspace dimension at inference time. We find that when provided sufficient context, such models can adapt with mild performance loss to solve more challenging tasks not present in the training set.

It is evident from Fig. \ref{fig:empirical-training-triple}(a) that the softmax network performs similarly to the linear one for this task. We can understand this through the small argument expansion of the softmax function mentioned above. The learned weights displayed in Fig. \ref{fig:empirical-training-triple}(b) indicate that $\beta^\textrm{softmax}\approx 0.194$ becomes small (note it decreases by a factor $\epsilon \approx 0.344$ relative to $\beta^\textrm{linear}$), 
while the value scale $\alpha^\textrm{softmax}\approx 1.607$ becomes larger by a similar factor $\sim 1/\epsilon$ to compensate.
Thus, although the optimal denoiser for this case is intuitively expressed through linear self-attention, it can also be achieved with softmax self-attention in the appropriate limit.

Moreover, we find that when the entire prompt undergoes a global invertible transformation $A \neq I$, the optimal attention weights are no longer scaled identity matrices but acquire a structured form determined by $A$. Both linear and softmax attention layers are able to recover this structure through training; see Appendix \ref{appendix:sec-coord-transform} for details and empirical verification.

\subsection{Case 2 -- Nonlinear manifolds}
Fig. \ref{fig:empirical-training-triple} (case 2) shows networks of size $n=16$ trained to denoise subspheres of dimension $d=8$ and radius $R=1$, with corruption $\sigma_Z^2=0.1$ and context length $L=500$. 
Once again, the network trains to have scaled identity weights. 

We note that although the network nearly achieves the optimal MSE on the test set, the weights appear at first glance to deviate slightly from the Bayes optimal predictor of Section \ref{sec:bayes-optimal-predictors}, which indicated $W_{PV}=\alpha I$, $W_{KQ}=\beta I$ with $\alpha=1, \beta= 1/\sigma_Z^2$. To better understand this, we consider a coarse-grained MSE loss landscape by scanning over $\alpha$ and $\beta$. See Fig. \ref{fig:loss-landscape-MSE-2d}(a) in Appendix \ref{appendix:loss-landscapes-and-extra-training}. We find that the 2D loss landscape has roughly hyperbolic level sets which is suggestive of the linear attention limit, where the weight scales become constrained by their product $\alpha \beta$. Reflecting the symmetry of the problem, we also note mirrored negative solutions  (i.e. one could also identify $\alpha = -1$, $\beta = -1/ {\sigma_Z^2}$ from the analysis in Section \ref{sec:bayes-optimal-predictors}). Importantly, the plot shows that the trained network lies in the same valley of the loss landscape as the optimal predictor, in agreement with Fig. \ref{fig:empirical-training-triple}. Moreover, the shape of the loss landscape suggested that linear attention might also be applicable to this case, which we demonstrate and discuss further in Appendix \ref{appendix:loss-landscapes-and-extra-training}. 

\subsection{Case 3 -- Gaussian mixtures}

Figure \ref{fig:empirical-training-triple} (case 3) shows networks of size $n=16$ trained to denoise balanced Gaussian mixtures with $p=8$ components that have isotropic variance $\sigma_0^2=0.02$ and centers randomly placed on the unit sphere in $\mathbb{R}^n$. The corruption magnitude is $\sigma_Z^2=0.1$ and context length is $L=500$. The baselines show the zero predictor (dashed grey line) as well as the optimum from Proposition (\ref{prop:bayes-case3}) (pink) and its $\sigma_0^2 \rightarrow 0$ approximation Eq. (\ref{eq:bayes-case3-zerolimit}) (grey). 

The trained weights qualitatively approach the optimal estimator for the zero-variance limit but with a slightly different scaling: while the scale of $W_{PV}$ is $\alpha \approx 1$, the $W_{KQ}$ scale is $\beta \approx 5.127 < 1/\sigma_Z^2$. To study this, we provide a corresponding plot of the 2D loss landscape in Fig. \ref{fig:loss-landscape-MSE-2d}(a) in Appendix \ref{appendix:loss-landscapes-and-extra-training}. While the symmetry of the previous case has been broken (the context cluster centers $\{\mu_{a}\}$ will not satisfy $\langle \mu \rangle = 0$), we again find that the trained network lies in the anticipated global valley of the MSE loss landscape.

\section{Connection to dense associative memory networks}
\label{sec:assoc-mem}

In each of the denoising problems studied above, we have shown analytically and empirically that the optimal weights of the one-layer transformer are scaled identity matrices $W_{PV} \approx \alpha I, W_{KQ} \approx \beta I$. In the softmax case, the trained denoiser can be concisely expressed as 
$$\hat x = g(X_{1:L}, \tilde x):=\alpha X_{1:L} \textrm{softmax}(\beta X_{1:L}^T \tilde x),$$
re-written such that $X \in \mathbb{R}^{n \times L}$ stores pure context tokens.

We now demonstrate that such denoising corresponds to one-step gradient descent (with specific step sizes) of energy models related to dense associative memory networks, also known as modern Hopfield networks \cite{ramsauer2021iclr, demircigil2017, krotov2016hopfield}.

Consider the energy function:
\begin{equation}
\label{eq:LSE-Hopfield-alt}
   \En(X_{1:L},s) = \frac{1}{2 \alpha} \| s \|^2 - \frac{1}{\beta}\log \left( 
    \sum_{t=1}^L e^{\beta X_t^T s}
    \right),
\end{equation}

which mirrors the \citet{ramsauer2021iclr} construction but with a Lagrange multiplier added to the first term. Figure \ref{fig:energy-landscape-denoising} illustrates this energy landscape for the spherical manifold case.

\begin{figure}[h!]
\centering
\includegraphics[width=0.49\textwidth]{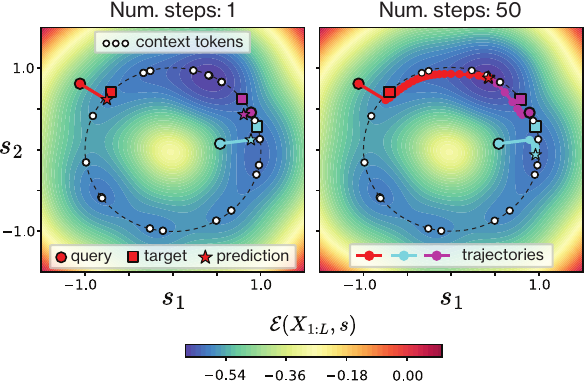}
\caption{
  Gradient descent denoising for the nonlinear manifold case (spheres) in $n=2$ with $d=1$. A context-aware dense associative memory network $\En(X_{1:L}, s)$ is constructed whose gradient corresponds to the Bayes optimal update (trained attention layer). Note that the density of sampled context tokens sculpts the valleys of the energy landscape. 
  Left: the attention step of a one-layer transformer trained on the denoising task corresponds to a single gradient descent step. 
  Right: Iterating the denoising process\textemdash as is conventional for Hopfield networks\textemdash can potentially degrade the estimate by causing it to become query-independent (e.g. converging to a distant minimum). Here $R=1, \sigma_{Z}^2=10, L=20$ and $\alpha=1, \beta= 1/\sigma_{Z}^2$. 
  }
\label{fig:energy-landscape-denoising}
\end{figure}

An operation inherent to the associative memory perspective is the recurrent application of a denoising update. 
Gradient descent iteration $s(t+1) =  s(t) - \gamma \;\nabla_s \En \bigl(X_{1:L}, s(t)\bigr)$ yields

\begin{equation}
\label{eq:DAM-GD-update}
\begin{aligned}
    s(t+1) &= 
      \left(1 - \frac{\gamma}{\alpha}\right) s(t) 
      + \gamma X_{1:L} \mathrm{softmax} \bigl( \beta X_{1:L}^T s(t) \bigr).
\end{aligned}
\end{equation}

It is now clear that initializing the state to the query $s(0)=\tilde x$ and taking a single step with size $\gamma=\alpha$ recovers the behavior of the trained attention model (Fig. \ref{fig:energy-landscape-denoising}). 
The attention mechanism here is thus mechanistically interpretable: the context tokens $X_{1:L}$ induce a context-dependent associative memory landscape, while the query acts as an initial condition for inference-time gradient descent. 
One could naturally consider alternative step sizes and recurrent iteration. However, Fig. \ref{fig:energy-landscape-denoising} demonstrates that naive iteration of Eq. (\ref{eq:DAM-GD-update}) has the potential to degrade performance. 

Additional details are provided in Appendix \ref{appendix:sec-mapping-attn-assocmem}. In particular, the energy model for linear attention is discussed in Appendix \ref{appendix:linear-attention-trad-Hopfield}.

\section{Discussion}
\label{sec:discussion}

Motivated by the connection between attention mechanisms and dense associative memories, here we have introduced in-context denoising, a task that distills their relationship. We first analyze the general problem, deriving Bayes optimal predictors for certain restricted tasks. We identify that one-layer transformers using either softmax or linearized self-attention are expressive enough to describe these predictors. We then empirically demonstrate that standard training of attention layers from random initial weights will readily converge to scaled identity weights with scales that approach the derived optima given sufficient context.
Accordingly, the rather minimal transformers studied here can perform optimal denoising of novel tasks provided at inference time via self-contained prompts. This work therefore sheds light on other in-context learning phenomena, a point we return to below. 

While practical transformers differ in various ways from the minimal models studied here, we note several key connections. 
Intriguingly, the self-attention heads of trained transformers sometimes exhibit weights $W_{KQ}$, $W_{PV}$ that resemble scaled identity matrices, i.e. $ c I + \epsilon$ with small fluctuations $\epsilon_{ij} \sim \mathcal{N}(0, \sigma^2)$,
an observation noted in \citet{trockman2023identity}. This phenomenon motivated their proposal of ``mimetic" weight initialization schemes mirroring this learned structure. Relatedly, connections to associative memory concepts have been explored in other architectures \cite{smart2021iclr}, which enabled data-dependent weight initialization strategies to be identified and leveraged. 
More broadly, our study suggests that trained attention layers can readily adopt structures that facilitate context-aware associative retrieval. 
We have also noted preliminary connections between our work and other architectural features of modern transformers, namely layer normalization and residual streams, which warrant further study. 

In-context denoising and generative modeling both involve learning about an underlying distribution, suggesting potential relationships between these two tasks. 
Recently, \citet{pham2024memorization} invoked spurious states of the Hopfield model as a way of understanding how one can move away from retrieving individual memorized patterns towards generalization via appropriate mixtures of multiple similar ``memories".   
In our work, one-step updates do not have to land in a spurious minimum, but we often operate under circumstances where there are such states (see, for example, the energy landscape in Fig. \ref{fig:energy-landscape-denoising}). 
More generally, analogies between energy-based associative memory and diffusion models have recently been noted \citep{Ambrogioni2024_diffusion, hoover2023memory}.
Lastly, Bayes optimal denoisers play an important role in the analysis \cite{ghio2024sampling} of a very related generative model that is based on stochastic interpolants \cite{albergo2022building}. 
Although this work focuses on the case where it is possible to sample enough tokens from the relevant distributions for certain functions to converge, generative models become important when the distribution is in a prohibitively high-dimensional space making direct sampling difficult. Nonetheless, investigating the precise relationship between our work and different generative modeling approaches would be an interesting direction to pursue.

Overall, this work refines the connection between dense associative memories and attention layers first identified in \cite{ramsauer2021iclr}. 
While we show that one energy minimization step of a particular DAM (associated with a trained attention layer) is optimal for the denoising tasks studied here, it remains an open question whether multilayer architectures with varying or tied weights could extend these results to more complex tasks by effectively performing multiple iterative steps. This aligns with recent studies on in-context learning, which have considered whether transformers with multiple layers emulate gradient descent updates on a context-specific objective \cite{vonOswald2023mordvintsev, shen2023khashabi, dai2023gptlearnicl, ahn2023transformers}, and may provide a bridge to work on emerging architectures guided by associative memory principles \citep{Hoover2023_EnergyTransformer}.
Investigating when and how multilayer attention architectures perform such gradient descent iterations in a manner that is both context-dependent and informed by a large training set represents an exciting direction for future research at the intersection of transformer mechanisms, associative memory retrieval, and in-context learning.

\section*{Software and Data}
Python code underlying this work is available at \href{https://github.com/mattsmart/in-context-denoising}{https://github.com/mattsmart/in-context-denoising}.

\section*{Acknowledgements}
MS acknowledges M. M\'ezard for very useful feedback on an earlier version of this work. AS thanks D. Krotov and P. Mehta for enlightening discussions on related matters. Our early work also benefited from AS's participation in the deeplearning23 workshop at the Kavli Institute for Theoretical Physics (KITP), which was supported in part by grants NSF PHY-1748958 and PHY-2309135 to KITP. AS thanks Y. Bahri and C. Pehlevan for their patience and willingness to listen to our early ideas at KITP.

\section*{Impact Statement}
This paper presents work whose goal is to advance the field of Machine Learning. There are many potential societal consequences of our work, none which we feel must be specifically highlighted here.


\newpage
\appendix
\onecolumn

\section{Notation}


\label{appA}
\renewcommand{\theequation}{\ref{appA}.\arabic{equation}}
\setcounter{equation}{0}  

\subsection{Recurring notation} 
\label{appendix:notation}

\begin{itemize}
  \item $n$ -- ambient dimension of input tokens.
  \item $x_t \in \mathbb{R}^{n}$ -- the value of the $t$-th random input token.
  \item $E=(X_1, ..., X_{L}, \tilde X) $ -- the random variable input to the sequence model. The ``tilde" indicates that the final token has in some way been corrupted. $E$ takes values $(x_1, ..., x_{L}, \tilde x) \in \mathbb{R}^{n \times (L+1)}$. Note: while capital $X$ or $X_i$ here denotes a random variable, in Section \ref{appendix:attention-and-softmax} use $X_{1:L}$ or simply $X$ to refer to the realized matrix of input tokens.
  \item $L$ -- context length = number of uncorrupted tokens.
  \item $d$ -- dimensionality of manifold $S$ that $x_t$ are sampled from
  \item $N$ -- number of training pairs
\end{itemize} 

\subsection{Bayes posterior notation}
\label{appendix:Bayes-notation}

\begin{itemize}
  \item $p_{X}(x)$ is task-dependent (the three scenarios considered here are introduced above).
  
  \item $p_{\tilde X}(\tilde x)$ where $\tilde x = x + z$. 
  For a sum of independent random variables, $Y=X_1+X_2$, their pdf is a convolution $p_Y(y)=\int p_{X_1}(x) p_{X_2}(y-x)dx$. Thus:
    \begin{equation*}
    \begin{aligned}
      p_{\tilde X}(\tilde x) 
        & = \int p_{Z}(z) p_{X}(\tilde x - z) dz \\
        & = C_{Z} \int e ^{- \lVert z \rVert ^2 / 2 \sigma_{Z}^2} p_{X}(\tilde x - z) dz
    \end{aligned}
    \end{equation*}
   where $C_Z = (2 \pi \sigma_{Z}^2)^{-n/2}$ is a constant. 

  \item $p_{\tilde X \mid X}(\tilde x \mid x)$: 
    This is simply 
    $$p_{Z}(\tilde x - x) = C_{Z} e ^{- \lVert \tilde x - x \rVert ^2 / 2 \sigma_{Z}^2}.$$

  \item $p_{X \mid \tilde X}(x \mid \tilde x)$: By Bayes' theorem, this is

    \begin{equation*}
    \begin{aligned}
      p_{X \mid \tilde X}(x \mid \tilde x) 
        & = \frac{p_{\tilde X \mid X}(\tilde x \mid x) p_{X}(x)} { p_{\tilde X}(\tilde x) }\\
        & = \frac
            {e ^{- \lVert \tilde x -  x \rVert ^2 / 2 \sigma_{Z}^2} p_X(x) }
            {\int e ^{- \lVert \tilde x -  x' \rVert ^2 / 2 \sigma_{Z}^2} p_X(x') dx'}.
    \end{aligned}
    \end{equation*}

\item Posterior mean:
\begin{equation*}
\begin{aligned}
   \E_{X \mid \tilde X} [ X \mid \tilde X ]
              &= \int x\: p_{X \mid \tilde X}(x \mid \tilde X) dx \\
              &= 
              \frac
            {\int
              x \, e ^{- \lVert \tilde X -  x \rVert ^2 / 2 \sigma_{Z}^2} p_X(x) 
            dx}
            {\int 
              e ^{- \lVert \tilde X -  x \rVert ^2 / 2 \sigma_{Z}^2} p_X(x) 
            dx}.
\end{aligned}
\end{equation*}

\end{itemize}
\section{Bayes optimal predictors for square loss}
\label{appendix:Bayes-optimal}

\subsection{Proof of Proposition \ref{prop:performance-bound}}
\label{appendix:performance-bound}

\begin{proof}

Observe that 
\begin{equation*}
\begin{aligned}
  \E \left[ \Vert X - f(\tilde X) \rVert^2 \right] 
  & = \E_{\tilde X} \left[ 
        \E_{X \mid \tilde X} \bigl[ \lVert X - f(\tilde X) \rVert^2 \mid \tilde X \bigr] 
      \right] \\ 
  & = \E_{\tilde X} \Bigl[ 
        \E_{X\mid \tilde X} \bigl[ \lVert X - \E [X \mid \tilde X] \rVert^2 \mid \tilde X \bigr]  \\
         & \:\:\:\:\:\:\:\:\:\:\:\: + \lVert \E [X \mid \tilde X] - f(\tilde X) \rVert^2 \Bigr] \\
  & \ge \E_{\tilde X} \left[ 
        \E_{X\mid \tilde X} \bigl[ \lVert X - \E [X \mid \tilde X] \rVert^2 \mid \tilde X \bigr]
    \right] \\
  & = \E_{\tilde X} \left[ \Tr{ \Cov (X \mid \tilde X)} \right].
\end{aligned}
\end{equation*}

Note the final line is independent of $f$. This inequality becomes an equality when $f(\tilde X) = \E [ X \mid \tilde X ]$. 
\end{proof}

\section{Details of Bayes optimal denoising baselines for each case}
\label{appendix:bayes-optimal-details}


\subsection{Proof of Proposition \ref{prop:Bayes-optimal-linear}}
\label{appendix:Bayes-optimal-linear}
\begin{proof}
The linear denoising task is a special case of the result in Proposition \ref{prop:performance-bound}. 
Here, $X$ is an isotropic Gaussian in a restricted subspace,
\begin{equation*}
  p_{X \mid \tilde X}(x \mid \tilde x) = C(\tilde x) p_X(x) e ^{  
    -\frac{\lVert x - \tilde x \rVert ^2} {2 \sigma_{Z}^2}
  }    
\end{equation*}
where $C(\tilde x)$ is a normalizing factor. The noise can be decomposed into parallel and perpendicular parts using the projection $P$ onto $S$, i.e.
\begin{equation*}
  \tilde X = \tilde X_{\paral} + \tilde X_{\perp}= P \tilde X + (I-P) \tilde X,
\end{equation*}
so that 
\begin{equation*}
  e^{- \frac{\lVert x - \tilde x \rVert ^2} {2 \sigma_{Z}^2}} = 
    e^{- \frac{\lVert x-\tilde x_{\paral} \rVert ^2}{2 \sigma_{Z}^2}} \:\:
    e^{- \frac{\lVert \tilde x_{\perp} \rVert ^2} {2 \sigma_{Z}^2}}.
\end{equation*}

Only the first factor matters for $p_{X \mid \tilde X}(x \mid \tilde x)$ since it depends on $x$. Then, for $x\in S$, the linear subspace supporting $p_X$,  dropping the $x$ independent $\tilde x_{\perp}$ contribution,   \\
\begin{equation*}
\begin{aligned}
p_X(x)e^{- \frac{\lVert x - \tilde x_{\paral} \rVert ^2} {2 \sigma_{Z}^2}}
  & \propto e^{- \frac{\lVert x \rVert ^2}{ 2 \sigma_{0}^2}
         - \frac{\lVert x - \tilde x_{\paral} \rVert ^2 }{ 2 \sigma_{Z}^2} }\\
  & \propto \exp 
    \left( - \frac{\lVert x - \frac{\sigma_{0}^2}{\sigma_{0}^2 + \sigma_{Z}^2} \tilde x_{\paral} \rVert ^2}
                   {2 \frac{\sigma_{0}^2 \sigma_{Z}^2}{\sigma_{0}^2 + \sigma_{Z}^2}
                  } 
    \right). 
\end{aligned}
\end{equation*}

Thus, $f(\tilde X) 
    = \frac{\sigma_{0}^2}{\sigma_{0}^2 + \sigma_{Z}^2} \tilde X_{\paral} 
    = \frac{\sigma_{0}^2}{\sigma_{0}^2 + \sigma_{Z}^2} P \tilde X$.

Using $\tilde X=X+Z$, $X=PX$, and the independence of $X$ and $Z$
$$\E\Big[\lVert X-\frac{\sigma_{0}^2}{\sigma_{0}^2 + \sigma_{Z}^2} P \tilde X \rVert^2\Big] =  \E\Big[\lVert\frac{\sigma_{Z}^2}{\sigma_{0}^2 + \sigma_{Z}^2}PX\rVert^2\Big]+\E\Big[\lVert\frac{\sigma_{0}^2}{\sigma_{0}^2 + \sigma_{Z}^2} PZ \rVert^2\Big]=\frac{\sigma_Z^4d\sigma_{0}^2+\sigma_0^4d\sigma_{Z}^2}{(\sigma_{0}^2 + \sigma_{Z}^2)^2}=\frac{d\sigma_0^2\sigma_{Z}^2}{\sigma_{0}^2 + \sigma_{Z}^2}.$$
\end{proof}


\subsection{Proof of Proposition \ref{prop:Bayes-optimal-manifold}}
\label{appendix:Bayes-optimal-manifold}

\begin{proof}
In the nonlinear manifold denoising problem, we focus on the case of lower dimensional spheres $S$ (e.g. the circle $S^1 \subset \mathbb{R}^2$). 
For such manifolds, we have
\begin{equation*}
\begin{aligned}
\E [ X \mid \tilde X = \tilde x ] 
  & = 
  \frac{\int e^{- \frac{\lVert x - \tilde x_{\paral} \rVert ^2 }{ 2 \sigma_{Z}^2}} \: x\: p_X(x) dx}
       {\int e^{- \frac{\lVert x - \tilde x_{\paral} \rVert ^2 }{ 2 \sigma_{Z}^2}} \: p_X(x) dx} \\
  & = 
  \frac{\int e^{\langle x, \tilde x_{\paral} \rangle / \sigma_{Z}^2} \:x\, d S_x}
       {\int e^{\langle x, \tilde x_{\paral} \rangle / \sigma_{Z}^2} \: d S_x}.
\end{aligned}
\end{equation*}
We have used the fact that $\lVert x - \tilde x_{\paral} \rVert ^2=\lVert x \rVert ^2+\lVert\tilde x_{\paral} \rVert ^2-2\langle x,\tilde x_{\paral}\rangle$ and that $\lVert x \rVert$ is fixed on the sphere.

The integrals can be evaluated directly once the parameters are specified. If $S$ is a $d$--sphere of radius $R$, then the optimal predictor is again a shrunk projection of $\tilde x$ onto $S$,

\begin{equation*}
\begin{aligned}
  \frac{\int_0^{\pi} e^{R \lVert \tilde x_{\paral} \rVert \cos \theta / \sigma_{Z}^2} \: \cos \theta \sin^{(d - 1)} \theta \: d\theta}
       {\int_0^{\pi} e^{R \lVert \tilde x_{\paral} \rVert \cos \theta / \sigma_{Z}^2} \: \sin^{(d - 1)} \theta \: d\theta}
  R \frac{\tilde x_{\paral}}{\lVert \tilde x_{\paral} \rVert} \\
  = 
  \frac{I_\frac{d+1}{2} \left(R \frac{\lVert \tilde x_{\paral} \rVert}{\sigma_{Z}^2} \right)}
       {I_\frac{d-1}{2} \left(R \frac{\lVert \tilde x_{\paral} \rVert}{\sigma_{Z}^2} \right)} 
  R \frac{\tilde x_{\paral}}{\lVert \tilde x_{\paral} \rVert},
\end{aligned}
\end{equation*}
where we used identities involving  $I_\nu(y)$, modified Bessel function of the first kind of order $\nu$ \cite{gradstein2007zwillinger}. The vector 
$R \frac{\tilde x_{\paral}}{\lVert \tilde x_{\paral} \rVert}$
is the point on $S$ in the direction of $x_{\paral}$. 

\end{proof}

\subsection{Proof of Proposition \ref{prop:bayes-case3}}
\label{appendix:bayes-case3}

\begin{proof}
For the clustering case involving isotropic Gaussian mixtures with parameters
$\{w_{a}, (\mu_{a}, \sigma_{a}^2)\}_{a=1}^p$, 

\begin{equation*}
\E [X \mid \tilde X = \tilde x] = 
  \frac
   {\int e^{-\frac{\|x - \tilde{x}\|^2}{2 \sigma_Z^2}} \sum_a \left( w_{a} C_a e^{-\frac{\|x - \mu_\alpha\|^2}{2 \sigma_a^2}}\right) 
     x \, dx
   }
   {\int e^{-\frac{\|x - \tilde{x}\|^2}{2 \sigma_Z^2}} \sum_a \left( w_{a} C_a e^{-\frac{\|x - \mu_a\|^2}{2 \sigma_a^2}} \right)
     \, dx
   },
\end{equation*}
where $C_a=(2\pi\sigma_a^2)^{-\tfrac{n}{2}}$.

We can simplify this expression by completing the square in the exponent and using the fact that the integral of a Gaussian about its mean is zero. This yields
\begin{equation*}
  \E [X \mid \tilde X = \tilde x] 
   = 
  \frac
   {\sum_{a} w_{a}C_a m_{a} \int \exp(-g_{a}) \,dx}
   {\sum_{a} w_{a} C_a \int \exp(-g_{a}) \,dx}
\end{equation*}

where we have introduced
\begin{equation*}
g_{a} =\frac{1}{2}\Bigl(\frac{\sigma_Z^2+\sigma_a^2}{\sigma_Z^2\sigma_a^2}\Bigr) \,\|x - m_\alpha\|^2
\, 
+ \frac{1}{2 (\sigma_Z^2 + \sigma_a^2)}\lVert \tilde x - \mu_{a}\rVert^2,
\end{equation*}
with 

\begin{equation*}
m_a 
 = \frac{\sigma_a^2 \, \tilde{x} + \sigma_Z^2 \,\mu_a}
     {\sigma_a^2 + \sigma_Z^2}.
\end{equation*}

Doing the integrals and using the expressions for $C_a,m_a$
\begin{equation*}
  \E [X \mid \tilde X = \tilde x] 
   = 
  \frac
   {\sum_{a} w_{a}\big(\frac{\sigma_Z^2+\sigma_a^2} {\sigma_a^2}\big)^{n/2} \exp\big(-\frac{\lVert \tilde x - \mu_{a}\rVert^2}{2 (\sigma_Z^2 + \sigma_a^2)}\big) \big(\frac{\sigma_a^2 \, \tilde{x} + \sigma_Z^2 \,\mu_a}
     {\sigma_a^2 + \sigma_Z^2} \big)}
   {\sum_{a} w_{a} \big(\frac{\sigma_Z^2+\sigma_a^2} {\sigma_a^2}\big)^{n/2} \exp\big(-\frac{\lVert \tilde x - \mu_{a}\rVert^2}{2 (\sigma_Z^2 + \sigma_a^2)}\big)}
\end{equation*}

In the case that the center norms $\lVert \mu_{a} \rVert$ are independent of $a$ and variances $\sigma_{a}^2=\sigma_0$, we have
\begin{equation*}
  \E [X \mid \tilde X = \tilde x] 
 = 
  \frac{\sigma_0^2}
     {\sigma_0^2 + \sigma_Z^2} \, \tilde{x}
    + 
\frac{\sigma_Z^2}
     {\sigma_0^2 + \sigma_Z^2}
  \frac
   {\sum_{a} w_{a} \mu_{a} 
            \exp\left( \frac{\langle \tilde{x}, \mu_a \rangle}{\sigma_Z^2 + \sigma_0^2} \right)
     }
   {\sum_{a} w_{a} 
            \exp\left( \frac{\langle \tilde{x}, \mu_a \rangle}{\sigma_Z^2 + \sigma_0^2} \right)
     }.
\end{equation*}

Note that in the limit that $\sigma_{0} \rightarrow 0$ , this becomes expressible by one-layer self-attention, since one can simply replace the matrix of cluster centers $M=[\mu_1 \ldots \mu_p]$ implicit in the expression with the context $X_{1:L}$ itself, 

\begin{equation*}
  \E [X \mid \tilde X] = \frac
    {\sum_{a } w_{a}
         e^{\langle \mu_{\alpha}, \tilde X \rangle / \sigma_{Z}^2}
        \mu_{a}
        }
    {\sum_{a }w_a e^{\langle \mu_{\alpha}, \tilde X \rangle / \sigma_{Z}^2}
                  }.
\end{equation*}
\end{proof}

\section{Additional details on attention layers and softmax expansion} 
\label{appendix:attention-and-softmax}

\subsection{Standard self-attention}
\label{appendix:self-attention-general}
Given a sequence of $L_{\text{seq}}$ input tokens $x_i \in \mathbb{R}^n$ represented as a matrix $X \in \mathbb{R}^{n \times L_{\text{seq}}}$, standard self-attention defines query, key, and value matrices
\begin{equation}
  K = W_K X, Q = W_Q X, V = W_V X
\end{equation}
where $W_K, W_Q \in \mathbb{R}^{n_\textrm{attn} \times n}$ and $W_V \in \mathbb{R}^{n_\textrm{out} \times n}$. 
The softmax self-attention map \cite{Vaswani2017} is then 
\begin{equation}
  \text{Attn}(X,W_V,W_K^TW_Q):=V \textrm{softmax}(K^T Q)\in\R^{n_\textrm{out}\times L_{\text{seq}}}.
\label{eq:attention-VKQ}
\end{equation}

  On merging $W_K$, $W_Q$ into $W_{KQ}=W_K^T W_Q$: 
  The simplification $W_{KQ}=W_K^T W_Q$ (made here and elsewhere) is general only when $n_\textrm{attn} \ge n$; in that case, the product $W_{KQ}$ can have rank $n$ and thus it is reasonable to work with the combined matrix. 
  On the other hand, if $n_\textrm{attn} < n$, then the rank of their product is at most $n_\textrm{attn}$ and thus there are matrices in $\mathbb{R}^{n \times n}$ that cannot be expressed as $W_K^T W_Q$. A similar point can be made about $W_{PV}$. 
  We note that while $n_\textrm{attn} < n$ may be used in practical settings, one often also uses multiple heads which when concatenated could be (roughly) viewed as a single higher-rank head.

We will also use the simplest version of linear attention \cite{katharopoulos2020icml},

\begin{equation}
  \text{Attn}_{\text{Lin}}(X,W_V,W_K^TW_Q):=\frac{1}{L_{\text{seq}}}V (K^T Q)\in\R^{n_\textrm{out}\times L_{\text{seq}}}.
\label{eq:linear-attention-VKQ}
\end{equation}

\subsection{Minimal transformer architecture for denoising}
\label{appendix:self-attention-denoising}

 We now consider a simplified one-layer linear transformer in term of our variable $E=(X_{1:L},\tilde X)$ taking values in $\R^{n\times (L+1)}$ and start with the linear transformer which still has sufficient expressive power to capture our finite sample approximation to the Bayes optimal answer in the linear case.
Inspired by \citet{bartlett2024jmlr}, we define 
\begin{equation}
  \textrm{Attn}_\text{Lin}(E,W_{PV},W_{KQ}) :=\frac{1}{L} W_{PV}E M_\text{Lin}E^T W_{KQ} E     
\end{equation}
taking values in $\mathbb{R}^{n \times (L+1)}$. The additional aspect compared to the last subsection is the masking matrix  $M_\text{Lin}\in\R^{(L+1)\times(L+1)}$ which is of the form
\begin{equation}
M_\text{Lin}=\begin{bmatrix}
  I_L & 0_{L\times 1}\\
  0_{1\times L} & 0
  \end{bmatrix},
\end{equation}
preventing  $W_{PV}\tilde X$ from being added to the output. 

Note that this more detailed expression is equivalent to the form used in the main text. 
$$ 
\hat X = F_\textrm{Lin}(E,\theta) := \frac{1}{L} W_{PV} X_{1:L}X_{1:L}^T W_{KQ} \tilde X
$$

With learnable weights $W_{KQ}, W_{PV} \in \mathbb{R}^{n \times n}$ abbreviated by $\theta$, we define 
\begin{equation}
  F(E,\theta):=[\textrm{Attn}_\text{Lin}(E,W_{PV},W_{KQ})]_{:,L+1}.    
\end{equation}
Note that, when $W_{PV}=\alpha I_n,W_{KQ}=\beta I_n$, and 
$\alpha \beta = \tfrac{1}{\sigma_0^2+\sigma_Z^2}$, $F(E,\theta)$ should approximate the Bayes optimal answer $f_\text{opt}(\tilde X)$ as $L\to \infty$.

Similarly, we could argue that the second two problems, the $d$-dimesional spheres and the $\sigma_0\to 0$ zero limit of the Gaussian mixtures could be addressed by the full softmax attention
\begin{equation}
  \textrm{Attn}(E,W_{PV},W_{KQ}) = W_{PV} E \textrm{softmax}(E^T W_{KQ} E+M) 
\end{equation}
taking values in $\mathbb{R}^{n \times (L+1)}$ where 
$M\in\bar\R^{(L+1)\times(L+1)}$ is a masking matrix  of the form
\begin{equation}
M=\begin{bmatrix}
  0_{L\times(L+ 1)}\\
  (-\infty) 1_{1\times L+1} 
  \end{bmatrix},
\end{equation}
once more, preventing the contribution of $\tilde X$ value to the output.  
The function $\textrm{softmax}(z):=\frac{1}{\sum_{i=1}^n e^{z_i}}(e^{z_1}, \ldots, e^{z_n})^T \in \mathbb{R}^n$ is applied column-wise. 

We then define 
\begin{equation}
  F(E,\theta):=[\textrm{Attn}(E,W_{PV},W_{KQ})]_{:,L+1},
\end{equation}

which is equivalent to the simplified form used in the main text:
$$
\hat X = F(E,\theta) := W_{PV} X_{1:L} \softmax(X_{1:L}^T W_{KQ} \tilde X).
$$

\subsection{Proof of Theorem \ref{theorem:convergence}}
\label{appendix:convergence}
\begin{proof}
Let the support of $p_X$ be a subset of a sphere, centered around the origin, of radius $R$. Then the function 
\begin{equation}
\label{eq:softmax-ratio}
g(\{X_t\}_{t=1}^L,\tilde x)=\frac{\sum_{t=1}^LX_te^{\langle X_t,\tilde x\rangle/\sigma_Z^2}}{\sum_{t=1}^Le^{\langle X_t,\tilde x\rangle/\sigma_Z^2}}=\frac{\frac{1}{L}\sum_{t=1}^LX_te^{\langle X_t,\tilde x\rangle/\sigma_Z^2}}{\frac{1}{L}\sum_{t=1}^Le^{\langle X_t,\tilde x\rangle/\sigma_Z^2}}.
\end{equation}
Both the numerator $\tfrac{1}{L}\sum_{t=1}^LX_te^{\langle X_t,\tilde x\rangle/\sigma_Z^2}$  and the denominator $\tfrac{1}{L}\sum_{t=1}^Le^{\langle X_t,\tilde x\rangle/\sigma_Z^2}$ are averages of independent and identically distributed bounded random variables. By the strong law of large numbers, as $L\to \infty$, the average vector in the numerator converges to almost surely to $\int e^{\langle x, \tilde x_{\paral} \rangle / \sigma_{Z}^2} \:x\, dp_X(x)$ for each component, while the average in the denominator almost surely converges $\int e^{\langle x, \tilde x_{\paral} \rangle / \sigma_{Z}^2} \: dp_X(x)$, which is positive. So, as $L\to\infty$, the ratio in Eq. \ref{eq:softmax-ratio} converges almost surely to 
 $$ \frac{\int e^{\langle x, \tilde x_{\paral} \rangle / \sigma_{Z}^2} \:x\, dp_X(x)}
       {\int e^{\langle x, \tilde x_{\paral} \rangle / \sigma_{Z}^2} \: dp_X(x)},$$
which is the Bayes optimal answer $f_\text{opt}(\tilde x)$ for all $\tilde x\in \R^n$.
\end{proof}

\section{Further discussion of convergence rates as $L\to \infty$ and the dependence on dimensions}
\label{appendix:convergence-rates}

Our analysis primarily focused on the asymptotic behavior as $L\to \infty$
 using the strong law of large numbers, which just requires the mean to exist \cite{loeve1977probability}. However, in the linear example, our tokens are Gaussian, and in the two nonlinear cases they are bounded. Intuitively, we expect error 
 $O(\frac{1}{\sqrt{L}})$.
 In fact, we can give precise results of the form that the probability of the difference between the empirical sum for the ideal weights departing from the expectation by less than 
 $C(\tilde x)\sqrt{\tfrac{f\big(d,\ln\tfrac {1}{\delta}\big)}{L}}$
 is greater than $1-\delta$. The function $C$ of the query vector and the function $f$ depend on the problem. Interestingly, these bounds depend on $d$, the dimension spanned by the tokens, not the ambient dimension $n$.

 As mentioned before, the results of the previous paragraph refer to the convergence of the finite sample attention expressions for ideal weights, namely those corresponding to Bayes optimal answer. There is a second source of error associated with finite sample estimation of weights, which should also get small as $L$ becomes large. Once more the expectation is that the weights are known to error 
 $O(\frac{1}{\sqrt{L}})$
 for well-converged training procedures, although this is more difficult to guarantee or quantify analytically.
 Overall we expect the loss (MSE) to go down inversely with some power of $L$.
 Fig.~\ref{fig:empirical-ICL}(a) provides some empirical evidence for this relationship, showing how performance improves with increasing context length.

 Notice that the one-layer transformer output is a linear combination of the uncorrupted samples. Hence, if the distribution $p_X$ is supported by a $d$-dimensional linear subspace, the estimate $\hat X$ is also in that subspace. We can therefore look at convergence restricted to the supporting subspace. Therefore, it is the dimensionality of the supporting subspace that matters.

 Let a $d$-dimensional vector space $V$ be a linear subspace of $\R^n$. We define the maximum norm for $V$ with respect to some orthonormal basis $\{v_i\}_{i=1}^d$ in $V$ as $||x||_{\infty,V}:=\max_{i\in\{1,\ldots,d\}}|\langle v_i,x\rangle|$ for any $x\in V$. The conventional maximum norm for $\R^n$, of course, is defined  with respect to the standard orthonormal basis  $\{e_j\}_{j=1}^n$. 
 Since $|\langle v_i,x\rangle|\le ||x||_{\infty,V}$, for all $i$,
 $$||x||_2^2=\sum_{i=1}^d(\langle v_i,x\rangle)^2\le d||x||_{\infty,V}^2 \implies ||x||_2\le\sqrt{d}||x||_{\infty,V}.$$
 Then, for any $x\in V\subseteq \R^n$,
 $||x||_\infty\le \sqrt{d}||x||_{\infty,V}$, since
 $|\langle x,e_j\rangle|\le||x||_2\le\sqrt{d}||x||_{\infty,V}$, for all $j\in\{1\ldots,n\}$. Thus, controlling component-wise error in any orthonormal basis in $V$ controls component-wise error in $\R^n$, in an $n$-independent but $d$-dependent manner. In the following, we give a flavor of how we can analyze finite sample estimate errors in $V$. The maximum norm $||\cdot||_\infty$ is to be understood as $||\cdot||_{\infty,V}$ for some orthonormal basis choice. Here is the result relevant to the linear case described Subsubsection \ref{case1}.

 \begin{prop}
\label{prop:linear-case-rate}
Let $X_t\overset{\text{i.i.d}}{\sim}\mathcal{N}(0,\sigma_0^2I_d), t=1,\ldots,L$ and let $\hat \Pi:=\frac{1}{\sigma_0^2L}\sum_{t=1}^LX_tX_t^T.$
Then, for any $\delta\in(0,1),$
$$\textrm{Pr}\Bigg[||\hat\Pi\tilde x-\tilde x||_\infty<C||\tilde x||_2\max\Bigg\{\sqrt\frac{d+\ln(\frac{2}{\delta})}{L},\frac{d+\ln(\frac{2}{\delta})}{L}\Bigg\}\Bigg]>1-\delta$$
for some $C>0$.
\end{prop}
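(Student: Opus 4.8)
The plan is to reduce the statement to a classical concentration bound for the operator norm of a Gaussian sample covariance matrix. First I would note that $\E[X_t X_t^T] = \sigma_0^2 I_d$, hence $\E[\hat\Pi] = I_d$ and $\hat\Pi\tilde x - \tilde x = (\hat\Pi - I_d)\tilde x$. Using $\|v\|_\infty \le \|v\|_2$ for $v \in \R^d$ together with $\|(\hat\Pi - I_d)\tilde x\|_2 \le \|\hat\Pi - I_d\|_{\mathrm{op}}\,\|\tilde x\|_2$, it then suffices to show
\[
\|\hat\Pi - I_d\|_{\mathrm{op}} < C\max\Bigl\{\sqrt{\tfrac{d+\ln(2/\delta)}{L}},\ \tfrac{d+\ln(2/\delta)}{L}\Bigr\}
\]
with probability at least $1-\delta$ (enlarging $C$ if needed to make the inequalities strict). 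This intermediate bound is uniform in $\tilde x$, so the proposition holds whether $\tilde x$ is treated as fixed or, as in our application where it is the subspace-projected query, as data-dependent.

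The second step is the operator-norm bound itself, which is exactly the standard sub-Gaussian (here: Gaussian) covariance estimation estimate; see, e.g., \citet{rigollet2023high}. Applied to $X_t \overset{\mathrm{iid}}{\sim}\mathcal{N}(0,\sigma_0^2 I_d)$, it states that for every $u>0$,
\[
\Bigl\|\tfrac{1}{\sigma_0^2 L}\textstyle\sum_{t=1}^L X_t X_t^T - I_d\Bigr\|_{\mathrm{op}} \le C'\Bigl(\sqrt{\tfrac{d+u}{L}} + \tfrac{d+u}{L}\Bigr)
\]
with probability at least $1-2e^{-u}$. Taking $u = \ln(2/\delta)$ makes the failure probability equal to $\delta$, and $\sqrt{a}+a \le 2\max\{\sqrt a, a\}$ collapses the two terms into the claimed maximum with $C = 2C'$. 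This already proves the proposition modulo the cited covariance bound.

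If a self-contained derivation is wanted, I would prove that bound by the usual $\varepsilon$-net argument: fix a $\tfrac14$-net $\mathcal{N}$ of $S^{d-1}$ with $|\mathcal{N}| \le 9^d$; for each fixed $u \in \mathcal{N}$, $\langle u, \hat\Pi u\rangle = \tfrac{1}{\sigma_0^2 L}\sum_t \langle u, X_t\rangle^2$ is an average of $L$ independent and identically distributed $\chi^2_1$ random variables with mean $1$, so Bernstein's inequality for sub-exponential variables gives the two-regime deviation bound for $\langle u, (\hat\Pi - I_d)u\rangle$; a union bound over $\mathcal{N}$ contributes a $\ln|\mathcal{N}| = d\ln 9$ term (this is the source of the $d$-dependence, and the reason the ambient dimension $n$ never enters once we restrict to the supporting subspace), and the standard passage from a $\tfrac14$-net to the full sphere for symmetric matrices costs only a constant factor, giving $\|\hat\Pi - I_d\|_{\mathrm{op}} \le 2\max_{u\in\mathcal{N}}|\langle u, (\hat\Pi - I_d)u\rangle|$.

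The main difficulty is bookkeeping rather than conceptual: stating and combining the correct two-regime ($\sqrt{\cdot}$ versus linear) form of Bernstein's inequality for the $\chi^2_1-1$ summands and propagating the constants cleanly through the net argument. No step is delicate once one commits to controlling the operator norm; the only subtlety worth flagging is that going through $\|\cdot\|_\infty \le \|\cdot\|_2$ is lossy---a coordinate-wise Bernstein plus a union bound over $d$ coordinates would replace $d$ by $\ln d$---but the weaker $d$-dependence stated in the proposition is all that is needed and follows immediately from the operator-norm control.
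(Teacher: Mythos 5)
Your proposal is correct and follows essentially the same route as the paper: reduce $\|\hat\Pi\tilde x-\tilde x\|_\infty$ to $\|\hat\Pi-I_d\|_{\mathrm{op}}\|\tilde x\|_2$ and invoke the standard Gaussian covariance concentration bound from \citet{rigollet2023high}. The extra detail you supply (the $\sqrt{a}+a\le 2\max\{\sqrt a,a\}$ step and the optional $\varepsilon$-net derivation) only fills in what the paper leaves to the citation.
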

\begin{proof} We start by bounding the maximum norm of the difference,
$$||\hat\Pi\tilde x-\tilde x||_\infty\le||\hat\Pi\tilde x-\tilde x||_2\le||\hat\Pi-I_d||_{\text{op}}||x||_2,$$
where $||\cdot||_{\text{op}}$ is the operator norm.

It can be shown that, for any $\delta\in (0,1)$
$$\textrm{Pr}\Bigg[||\hat\Pi-I_d||_\text{op}<C\max\Bigg\{\sqrt\frac{d+\ln(\frac{2}{\delta})}{L},\frac{d+\ln(\frac{2}{\delta})}{L}\Bigg\}\Bigg]>1-\delta$$
for some $C>0$ \cite{rigollet2023high}. Combining with the first bound, we get our result.
\end{proof}

As to the nonlinear cases, the key result of Theorem~\ref{theorem:convergence} is the convergence of the numerator 
$\frac{1}{L}\sum_{t=1}^LX_te^{\langle X_t,\tilde x_{\paral}\rangle/\sigma_Z^2}$
to
$\E[ Xe^{\langle X, \tilde x_{\paral} \rangle / \sigma_{Z}^2}]=\int e^{\langle x, \tilde x_{\paral} \rangle / \sigma_{Z}^2} \:x\, dp_X(x)$
and the denominator
$\frac{1}{L}\sum_{t=1}^Le^{\langle X_t,\tilde x_{\paral}\rangle/\sigma_Z^2}$
to
$\E[ e^{\langle X, \tilde x_{\paral} \rangle / \sigma_{Z}^2}]=\int e^{\langle x, \tilde x_{\paral} \rangle / \sigma_{Z}^2}  dp_X(x)$.

In the following, we assume that the support of $p_X$ is inside a vector space $V$ whose dimension we denote by $d$ (instead of $d+1$, as in the sphere problem). In addition, we refer to the projection of the query on $V$ by $\tilde x\in V$, instead of $\tilde x_{\paral}$. As usual, the maximum norm in $V$ is with respect to some orthonormal basis choice

\begin{prop}
\label{prop:nonlinear-case-rate}
Let $X_t\overset{\text{i.i.d}}{\sim}p_X$ and $||X_t||_2\le R$ for $t=1,\ldots,L$. 

Then, for any $\delta\in(0,1),$
$$\textrm{Pr}\Bigg[\Big|\frac{1}{L}\sum_{t=1}^Le^{\langle X_t,\tilde x\rangle/\sigma_Z^2}-\E[ e^{\langle X, \tilde x \rangle / \sigma_{Z}^2}]\Big|<\sinh{\bigg(\frac{R||\tilde x||_2}{\sigma_Z^2}\bigg)}\sqrt{\frac{2}{L}\ln\bigg(\frac{2}{\delta}\bigg)}\Bigg]\ge1-\delta$$
and 
$$\textrm{Pr}\Bigg[\Big|\Big|\frac{1}{L}\sum_{t=1}^LX_te^{\langle X_t,\tilde x\rangle/\sigma_Z^2}-\E[ Xe^{\langle X, \tilde x \rangle / \sigma_{Z}^2}]\Big|\Big|_\infty<Re^{\frac{R||\tilde x||_2}{\sigma_Z^2}}\sqrt{\frac{2}{L}\ln\bigg(\frac{2d}{\delta}\bigg)}\Bigg]\ge 1-\delta.$$
\end{prop}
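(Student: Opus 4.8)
The plan is to obtain both inequalities as immediate consequences of Hoeffding's inequality \cite{hoeffding1994probability} for sums of i.i.d.\ bounded random variables, once the appropriate deterministic ranges have been pinned down via Cauchy--Schwarz.

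For the denominator, set $Y_t := e^{\langle X_t,\tilde x\rangle/\sigma_Z^2}$, which are i.i.d.\ copies of $Y := e^{\langle X,\tilde x\rangle/\sigma_Z^2}$. Since $\|X_t\|_2\le R$, Cauchy--Schwarz gives $|\langle X_t,\tilde x\rangle|\le R\|\tilde x\|_2$, so each $Y_t$ lies in the interval $[e^{-R\|\tilde x\|_2/\sigma_Z^2},\, e^{R\|\tilde x\|_2/\sigma_Z^2}]$, whose length is exactly $2\sinh(R\|\tilde x\|_2/\sigma_Z^2)$. Applying Hoeffding to $\tfrac1L\sum_t Y_t$ with this range and equating the two-sided tail $2\exp\!\big(-L t^2/(2\sinh^2(R\|\tilde x\|_2/\sigma_Z^2))\big)$ to $\delta$, then solving for $t$, produces precisely $t=\sinh(R\|\tilde x\|_2/\sigma_Z^2)\sqrt{(2/L)\ln(2/\delta)}$, which is the first claim.

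For the numerator, which is vector-valued, I would work coordinatewise with respect to a fixed orthonormal basis $\{v_i\}_{i=1}^d$ of the supporting subspace $V$ --- this is exactly why the bound involves $d$ rather than the ambient dimension $n$, as discussed just above the proposition, since each summand $X_t e^{\langle X_t,\tilde x\rangle/\sigma_Z^2}$ lies in $V$. For each coordinate, set $W_t^{(i)} := \langle v_i, X_t\rangle\, e^{\langle X_t,\tilde x\rangle/\sigma_Z^2}$; then $|\langle v_i, X_t\rangle|\le\|X_t\|_2\le R$ and $e^{\langle X_t,\tilde x\rangle/\sigma_Z^2}\le e^{R\|\tilde x\|_2/\sigma_Z^2}$, so $W_t^{(i)}$ lies in a symmetric interval of length $2R\,e^{R\|\tilde x\|_2/\sigma_Z^2}$. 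Hoeffding applied to $\tfrac1L\sum_t W_t^{(i)}$ with failure probability $\delta/d$ gives a deviation bound of $R\,e^{R\|\tilde x\|_2/\sigma_Z^2}\sqrt{(2/L)\ln(2d/\delta)}$ for that coordinate, and a union bound over the $d$ coordinates then controls $\|\cdot\|_{\infty,V}$ with total failure probability at most $\delta$, giving the second claim.

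There is no substantive obstacle here; the work is essentially bookkeeping. The only two points requiring a little care are (i) keeping the sharp constant in the scalar bound by using the exact interval length $2\sinh(\cdot)$ rather than the cruder $2e^{R\|\tilde x\|_2/\sigma_Z^2}$ (note the vector bound genuinely needs the exponential, since $\langle v_i,X_t\rangle$ can change sign and the relevant interval is symmetric about $0$), and (ii) remembering to union-bound over the $d$ coordinates of $V$ rather than the $n$ coordinates of $\mathbb{R}^n$, which is legitimate precisely because the random vector in question lies in $V$.
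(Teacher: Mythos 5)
Your proposal is correct and follows essentially the same route as the paper's proof: Cauchy--Schwarz to bound the ranges, Hoeffding's inequality for the scalar average with the exact interval length $2\sinh(R\|\tilde x\|_2/\sigma_Z^2)$, and a coordinatewise Hoeffding bound with a union bound over the $d$ coordinates of the supporting subspace for the vector average. The two points of care you flag (the sharp $\sinh$ constant and the $d$-dimensional rather than $n$-dimensional union bound) are exactly the ones the paper relies on.
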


\begin{proof} We provide the sketch of our proof here, the key ingredient of which is the Hoeffding inequality \cite{hoeffding1994probability}.

For the average $\frac{1}{L}\sum_{t=1}^Le^{\langle X_t,\tilde x\rangle/\sigma_Z^2}$, each term in the sum is bounded above and below by $e^{\pm\frac{R||\tilde x||_2}{\sigma_Z^2}}$.
So, the Hoeffding inequality leads to
$$\textrm{Pr}\Bigg[\Big|\frac{1}{L}\sum_{t=1}^Le^{\langle X_t,\tilde x\rangle/\sigma_Z^2}-\E[ e^{\langle X, \tilde x \rangle / \sigma_{Z}^2}]\Big|\ge \epsilon]\le 2\exp\Bigg[-\frac{2L\epsilon^2}{\Big(\exp\big(\frac{R||\tilde x||_2}{\sigma_Z^2}\big)-\exp\big(-\frac{R||\tilde x||_2}{\sigma_Z^2}\big)\Big)^2}\Bigg]=2\exp\Bigg[-\frac{L\epsilon^2}{2\sinh^2\big(\frac{R||\tilde x||_2}{\sigma_Z^2}\big)}\Bigg].$$
Setting $\delta=2\exp\Big[-\tfrac{L\epsilon^2}{2\sinh^2\big(\tfrac{R||\tilde x||_2}{\sigma_Z^2}\big)}\Big]$, we get $\epsilon=\sinh{\big(\tfrac{R||\tilde x||_2}{\sigma_Z^2}\big)}\sqrt{\tfrac{2}{L}\ln\big(\tfrac{2}{\delta}\big)}$, which gives our first probabilistic inequality.

For each component of the vector average $\frac{1}{L}\sum_{t=1}^LX_te^{\langle X_t,\tilde x\rangle/\sigma_Z^2}$, the terms in the sum are bounded above and below by $\pm R^{\frac{R||\tilde x||_2}{\sigma_Z^2}}$. We use similar arguments involving the Hoeffding inequality, combined with the union bound over all $d$ coordinates 
$$\textrm{Pr}\Bigg[\Big|\Big|\frac{1}{L}\sum_{t=1}^LX_te^{\langle X_t,\tilde x\rangle/\sigma_Z^2}-\E[ Xe^{\langle X, \tilde x \rangle / \sigma_{Z}^2}]\Big|\Big|_\infty\ge \epsilon]\le 2d\exp\Bigg[-\frac{L\epsilon^2}{2R^2\exp\big(\frac{2R||\tilde x||_2}{\sigma_Z^2}\big)}\Bigg].$$ Once more, setting the RHS to $\delta$ and solving for $\epsilon$, we get our second probabilistic inequality.
\end{proof}

\section{Limiting behaviors of the softmax function and softmax attention}
\label{appendix:expansion-softmax}

\subsection*{For small argument}
A Taylor expansion of the softmax function at zero gives
\begin{equation*}
\text{softmax}(\beta v) = \frac{1}{Z} \left( \mathbbm{1}_L + \beta v  + O(\beta^2) \right),
\end{equation*}
where $Z = \sum_i \left( 1 + \beta v_i + O(\beta^2))\right)=L(1+\beta\bar v+ O(\beta^2))$ is a normalizing factor, with $\bar v=\tfrac{1}{L}\sum_iv_i$. The notation $\mathbbm{1}_L$ stands for an $L$-dimensional vector of ones.

Thus, we have
\begin{lemma}[Small argument expansion of softmax]
\label{appendix:small-arg}
As $\beta\to 0$,
\begin{equation*}
\mathrm{softmax}(\beta v)
= \frac{1}{L \left( 1 + \beta\bar v + O(\beta^2)\right)} \left( \mathbbm{1}_L + \beta v +O(\beta^2)\right)
=\frac{1}{L}\left( \mathbbm{1}_L + \beta(v-\bar v\mathbbm{1}) + O(\beta^2)\right).
\end{equation*}
\end{lemma}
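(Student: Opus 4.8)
The plan is to read off the expansion directly from the scalar Taylor series of the exponential, keeping terms to first order in $\beta$ and treating all $O(\beta^2)$ remainders entrywise, with $L$ and $v$ held fixed. First I would expand each component of the numerator: since $e^{\beta v_i} = 1 + \beta v_i + O(\beta^2)$ for each $i \in \{1,\dots,L\}$, the vector $(e^{\beta v_1},\dots,e^{\beta v_L})^T$ equals $\mathbbm{1}_L + \beta v + O(\beta^2)$. Summing these $L$ scalar expansions gives the normalizing factor $Z = \sum_{i=1}^L e^{\beta v_i} = L + \beta \sum_{i=1}^L v_i + O(\beta^2) = L\bigl(1 + \beta \bar v + O(\beta^2)\bigr)$, where $\bar v = \tfrac1L \sum_i v_i$; dividing the numerator vector by $Z$ establishes the first displayed equality in the lemma.

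For the second equality, I would invert the scalar factor $1 + \beta\bar v + O(\beta^2)$ using $\tfrac{1}{1+u} = 1 - u + O(u^2)$ with $u = \beta\bar v + O(\beta^2)$, so that $Z^{-1} = \tfrac1L\bigl(1 - \beta\bar v + O(\beta^2)\bigr)$. Multiplying this by the numerator expansion $\mathbbm{1}_L + \beta v + O(\beta^2)$ and collecting the constant and the $\beta$-linear terms — the $\beta^0$ term is $\tfrac1L \mathbbm{1}_L$, the $\beta^1$ term is $\tfrac1L(v - \bar v\,\mathbbm{1}_L)$, and every cross-product of a $\beta$-term with $-\beta\bar v$ or with a remainder is absorbed into $O(\beta^2)$ — yields the stated form $\mathrm{softmax}(\beta v) = \tfrac1L\bigl(\mathbbm{1}_L + \beta(v - \bar v\mathbbm{1}_L) + O(\beta^2)\bigr)$.

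There is no serious obstacle here; the only point requiring mild care is the bookkeeping of the error terms, namely confirming that the entrywise $O(\beta^2)$ remainders in the numerator and the $O(\beta^2)$ remainder in $Z^{-1}$ combine — through a finite sum and product with $L$ and $v$ fixed — into a single entrywise $O(\beta^2)$ vector, and that the first-order coefficient is exactly $v - \bar v \mathbbm{1}_L$ rather than merely $v$. As a consistency check one could instead differentiate $\mathrm{softmax}(\beta v)$ at $\beta = 0$: the Jacobian of softmax at the uniform point $\tfrac1L\mathbbm{1}_L$ is $\tfrac1L\bigl(I_L - \tfrac1L\mathbbm{1}_L\mathbbm{1}_L^T\bigr)$, and applying it to $v$ reproduces precisely the first-order term $\tfrac1L(v - \bar v\mathbbm{1}_L)$.
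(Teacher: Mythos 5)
Your proposal is correct and follows essentially the same route as the paper: entrywise Taylor expansion of the exponentials, computation of the normalizer $Z = L(1+\beta\bar v + O(\beta^2))$, and inversion of the scalar factor to collect the first-order coefficient $v-\bar v\mathbbm{1}_L$. The Jacobian consistency check at the uniform point is a nice sanity check but does not change the argument.
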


\subsection{Proof of Proposition \ref{prop:attention-limit}}
\label{appendix:attention-limit}
\begin{proof}

$$
F\Big(E,\big(\frac{1}{\epsilon}W_{PV},\epsilon W_{KQ}\big)\Big) := \frac{1}{\epsilon}W_{PV} X_{1:L} \softmax(\epsilon X_{1:L}^T W_{KQ} \tilde X).
$$

Using Lemma \ref{appendix:small-arg}, as $\epsilon\to 0$,
\begin{align}
&F\Big(E,\big(\frac{1}{\epsilon}W_{PV},\epsilon W_{KQ}\big)\Big)=\frac{1}{\epsilon}W_{PV} X_{1:L}\Bigg[\frac{1}{L}\left( \mathbbm{1}_L +\epsilon \big( X_{1:L}^T W_{KQ}\tilde X-(\frac{1}{L}\sum_t X_t^TW_{KQ}\tilde X)\mathbbm{1}_L\big) + O(\epsilon^2)\right)\Bigg]\nonumber\\
&=\frac{1}{\epsilon} W_{PV}\bar X
+ \frac{1}{L}W_{PV}\sum_{t=1}^LX_t(X_t-\bar X)^TW_{KQ}\tilde X+O(\epsilon),
\end{align}
where $\bar X=\tfrac{1}{L}\sum_{t=1}^LX_t$ is the empirical mean and the notation $\mathbbm{1}_L$ emphasizes that it is a column vector of ones with dimension $L$. 

\end{proof}

\subsection*{For large argument}
As \( \beta \to \infty \), the softmax function simply selects the maximum over its inputs (as long as the the maximum is unique):
\begin{equation*}
\text{softmax}(\beta v) \approx
\begin{cases}
  1 & \text{if } i = \arg\max_j v_j, \\
  0 & \text{otherwise}.
\end{cases}
\end{equation*}
In this case, all attention weight is given to a single element, and the others are effectively ignored.

\section{MSE Loss landscape for scaled identity weights}
\label{appendix:loss-landscapes-and-extra-training}

\begin{figure}[h!]
\centering
\includegraphics[width=0.92\textwidth]{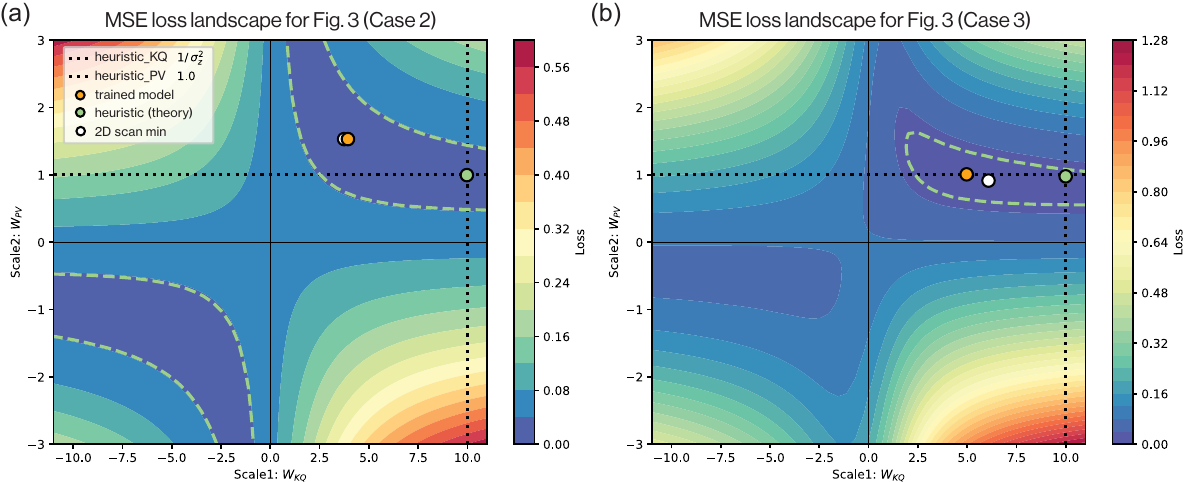}
\caption{
  Loss landscape corresponding to Case 2 and Case 3 of Fig. \ref{fig:empirical-training-triple}. 
  The MSE is numerically evaluated by assuming scaled identity weights $W_{KQ}=\beta I_n$ (x-axis) and $W_{PV}=\alpha I_n$ (y-axis) and scanning over a $50 \times 50$ grid. The green point corresponds to the heuristic minimizer identified from the posterior mean. In Case 2 it is exact, while in case 3 it is an approximation that neglects the residual term (see Proposition \ref{prop:bayes-case3}). The orange point corresponds to the learned weights displayed in Fig. \ref{fig:empirical-training-triple}(b), while the white point corresponds to the numerically identified minimum from this 2D scan. These can fluctuate due to the finite context ($L=500$) and sampling ($N=800$ here). In both panels, it is apparent that the trained weights and the heuristic estimator co-occur in a broad valley (contour) of the loss landscape. 
  }
\label{fig:loss-landscape-MSE-2d}
\end{figure}

The loss landscapes in Fig. \ref{fig:loss-landscape-MSE-2d} exhibit large, low-cost valleys with a roughly hyperbolic structure that is especially apparent in Case 2. This indicates a multiplicative tradeoff in the scales of $W_{KQ}$ and $W_{PV}$, which suggests that linear attention might be applicable here as well. For completeness, Figure \ref{fig:case2and3-linear-vs-softmax} shows linear attention performance for both cases, demonstrating that it performs quite similarly to softmax for sub-sphere denoising, but less well in the Gaussian mixtures case. 

\begin{figure}[h!]
\centering
\includegraphics[width=0.72\textwidth]{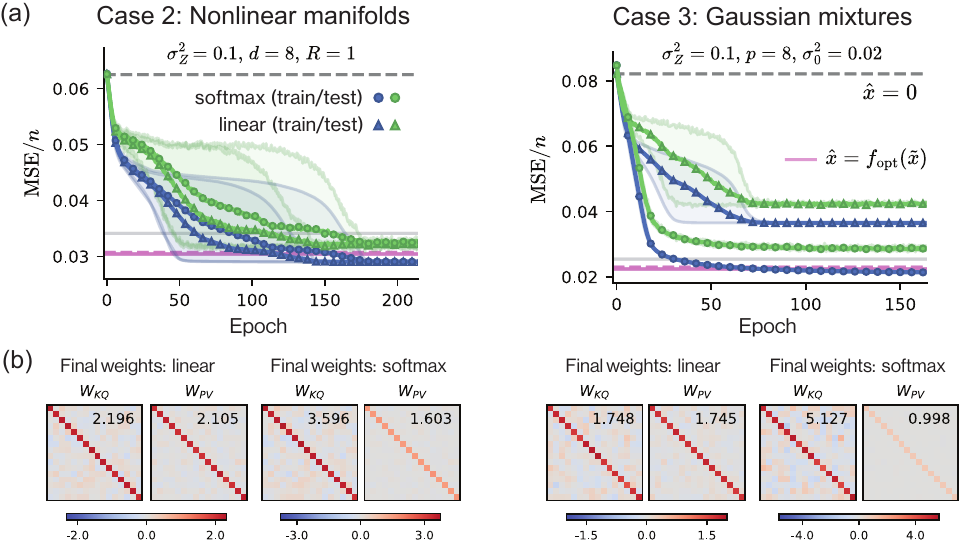}
\caption{
  Linear attention performance for Cases 2 and 3. Additional empirical results for the nonlinear manifolds case (left) and the Gaussian mixtures case (right). (a) Loss dynamics for randomly initialized softmax and linear attention layers. Solid lines represent the average loss over six seeds, with shaded area indicating the range. Training details and parameters follow Fig. \ref{fig:empirical-training-triple}(a). (b) Representative final attention weights for each layer. 
  }
\label{fig:case2and3-linear-vs-softmax}
\end{figure}

\section{Structured optimal weights under prompt transformation}
\label{appendix:sec-coord-transform}
We find that one-layer transformers can learn to undo arbitrary invertible coordinate transformations that warp the denoising tasks.
Focusing on the subspace denoising case, suppose each prompt is transformed by a fixed invertible square matrix $A$, i.e. $E=(X_{1:L}, \tilde x) \rightarrow E'=(AX_{1:L}, A \tilde x)$.
If the target remains $x_{L+1}$ in the untransformed space, then the optimal attention weights are no longer diagonal, but instead take a structured form determined by the transformation matrix:

\begin{equation}
W_{PV} = \alpha A^{-1}, \quad W_{KQ} = \beta (AA^T)^{-1},
\label{solution-for-coord-transform}
\end{equation}
where $\alpha \beta = \frac{1}{\sigma_0^2 + \sigma_Z^2}$ as before. 

\begin{figure}[h!]
\centering
\includegraphics[width=0.92\textwidth]{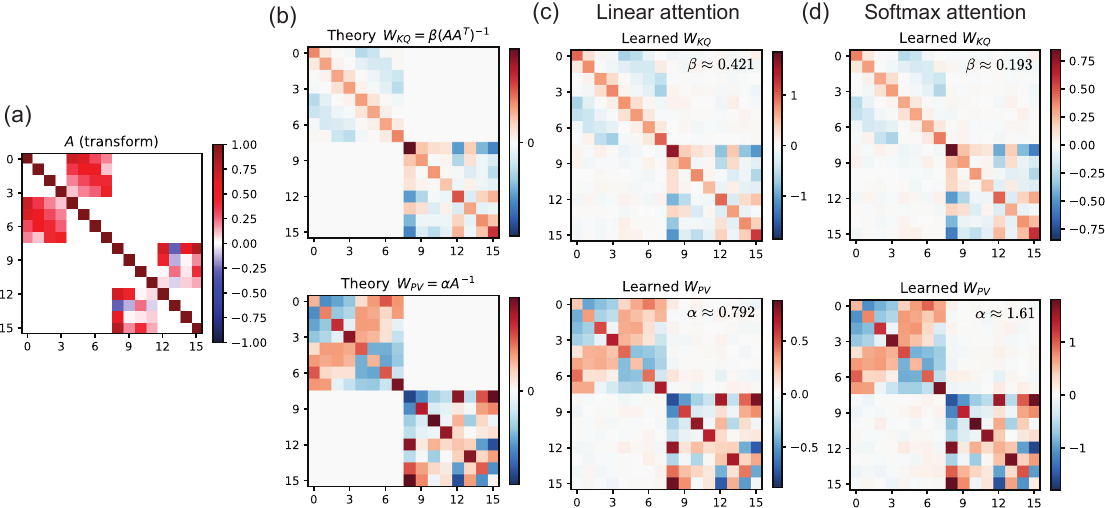}
\caption{
    (a) Example transformation $A$ used to globally alter the in-context denoising prompts. (b) Structure of the optimal attention weights for this transformed subspace-denoising task. (c,d) Empirically, we find that both linear attention and softmax attention layers are able to learn these structured targets, but with distinct scalings $\alpha, \beta$. Final weights after 500 epochs using Adam, random initializations, and context length $L=500$; other parameters follow Fig. \ref{fig:empirical-training-triple}(a).
  }
\label{fig:non-identity-weights}
\end{figure}

Notably, we find that both the linear and softmax attention layers are able to learn these structures; see Fig. \ref{fig:non-identity-weights} for an example. We use the same basic training procedure as the limiting case of $A=I$ (no additional coordinate transformation) assumed throughout the main text. 

Suppose we still work with transformed coordinates $Y_t=AX_t$ and $\tilde Y=A\tilde X$, but now intend to retrieve $Y_{L+1}=AX_{L+1}$ in the new coordinate space (rather than $X_{L+1}$ as above). In this case, we would be dealing with variables with covariance matrices $\Sigma\propto AA^T$. We would need weight matrices that are not simply proportional to identity to deal with the covariance structure. This is also the case for in-context learning of linear functions when the input has an anisotropic covariance matrix \cite{bartlett2024jmlr, ahn2023transformers}. 
Recall in the original setting, we had the sample covariance
$\E[X X^T]\equiv\Sigma_X=\sigma_0^2 P$ and noise $\Sigma_Z \equiv \sigma_Z^2 I$, leading to the estimator, Eq. (\ref{eq:main-case1-sec3}): 
$\hat X = \frac{1}{(\sigma_0^2 + \sigma_Z^2)L} \sum_{t=1}^L X_t \langle X_t, \tilde X \rangle$ . 
Here, the sample covariance is $\Sigma_Y \equiv \sigma_0^2 A P A^{T}$, and the noise $V\equiv AZ$ has covariance $\Sigma_{V} \equiv \sigma_Z^2 A A^{T}$.
One can show the generalized solution is $\hat Y = \Sigma_Y (\Sigma_Y + \Sigma_{V})^{-1} \tilde Y$.
Thus, in the transformed coordinates, the denoising estimate is
\begin{equation}
\hat Y = \frac{1}{(\sigma_0^2 + \sigma_Z^2)L}\sum_{t=1}^L Y_t \langle A^{-1}Y_t,A^{-1}\tilde Y\rangle.
\label{solution-for-coord-transform-in-Y}
\end{equation}

For the relationship of this denoising result in $Y$ to energy models, as discussed in Section~\ref{sec:assoc-mem} and Subsection~\ref{appendix:linear-attention-trad-Hopfield}, we need a modified energy $\En(Y_{1:L},s)=\frac{1}{2\gamma} \|s\|^{2} - \frac{1}{2L}\sum_{t=1}^L\langle A^{-1}Y_t,A^{-1}s\rangle^2$ and a preconditioner proportional to $AA^T$.

\section{Additional comments on the mapping from attention to associative memory models}
\label{appendix:sec-mapping-attn-assocmem}

\subsection{Linear attention and traditional Hopfield model}
\label{appendix:linear-attention-trad-Hopfield}
We have considered a trained network with linear attention, relating the query $\tilde X$ and the estimate of the target $\hat X$, of the form 
\begin{equation}
\hat X=f(\tilde X):=\frac{\gamma}{L}\sum_{t=1}^LX_t\langle X_t,\tilde X\rangle
\label{linear-net}
\end{equation}
with $\gamma=\tfrac{1}{\sigma_0^2+\sigma_Z^2}$. 

With 
\begin{equation}
    \En(X_{1:L},s) := \frac{1}{2\gamma} \|s\|^{2} - \frac{1}{2L}s^T(\sum_{t=1}^LX_tX_t^T)s
\label{eq:spherical-Hopfield-appendix}
\end{equation}
gradient descent iteration $s(t+1) =  s(t) - \gamma \;\nabla_s \En \bigl(X_{1:L}, s(t)\bigr)$ gives us $$s(t+1) =\frac{\gamma}{L}\sum_tX_t\langle X_t,s(t)\rangle$$ 
making the one-step iteration our denoising operation.

We will call this energy function the Naive Spherical Hopfield model for the following reason.
For random memory patterns $X_{1:L}$, and the query denoting Ising spins $s\in\{-1,1\}^n$, the so-called Hopfield energy is 
\begin{equation}
    \En_\text{Hopfield}(X_{1:L},s) := - \frac{1}{2L}s^T(\sum_{t=1}^LX_tX_t^T)s.
\label{eq:Ising-Hopfield}
\end{equation}
We could relax the Ising nature of the spins by letting $s\in\R^n$, with a constraint $||s||^2=n$. This is the spherical model \cite{fischer1993spin} since the spin vector $s$ lives on a sphere. If we minimize this energy the optimal $s$ would be aligned with the dominant eigenvector of the matrix  $\tfrac{1}{L}(\sum_{t=1}^LX_tX_t^T)$ \cite{fischer1993spin}, and the model will not have a retrieval phase (see \citet{bolle2003spherical} for a similar model that does). A soft-constrained variant can also be found in Section 3.3, Model C of  \citet{krotov2021large}.

We could reformulate the optimization problem of minimizing the Hopfield energy, subject to $||s||^2=R^2$, as
$$\argmin_{s\in\R^n} \Big[\max_\lambda \big\{ - \frac{1}{2L}s^T(\sum_{t=1}^LX_tX_t^T)s+\lambda (s^Ts-R^2)\big\}\Big].$$
The $s$-dependent part of the Lagrangian, with $\lambda$ replaced by $\tfrac{1}{2\gamma}$ gives us the energy function in Eq. \ref{eq:spherical-Hopfield-appendix} which we have called the Naive Spherical Hopfield model. 

\begin{equation}
    \En(X_{1:L},s) := \frac{1}{2\gamma} \|s\|^{2} - \frac{1}{2L}s^T(\sum_{t=1}^LX_tX_t^T)s=\frac{1}{2} s^T\Big[(\sigma_0^2+\sigma_Z^2) I_n-\frac{1}{L}(\sum_{t=1}^LX_tX_t^T)\Big ]s.
\label{eq:Naive-spherical-Hopfield-appendix}
\end{equation}

For $L$ much larger than $n$, $\tfrac{1}{L}\sum_{t=1}^LX_tX_t^T\approx \sigma_0^2 P$, so its eigenvalues are either 0 or are very close to $\sigma_0^2$. Hence, for large $L$ and $\sigma_Z>0$, this quadratic function is very likely to be positive definite.  One-step gradient descent brings $s$ down to the $d$-dimensional linear subspace $S$ spanned by the patterns, but repeated gradient descent steps would take $s$ towards zero.

\subsection{Remarks on the softmax attention case (mapping to dense associative memory networks)}

Regarding the mapping discussed in the main text, we note that there is a symmetry condition on the weights $W_{KQ}, W_{PV}$ that is necessary for the softmax update to be interpreted as a gradient descent (i.e. a conservative flow). 
In general, a flow $ds/dt = f(s)$ is conservative if it can be written as the gradient of a potential, i.e. 
$f(s)=-\nabla_s V(s)$ for some $V$. For this to hold, the Jacobian of the dynamics $J_f(s)=\nabla_s f$ must be symmetric. 

The softmax layer studied in the main text is $f(s)=W_{PV} X \,\softmax(X^T W_{KQ} s)$.
We will denote $z(s)=X^T W_{KQ} \,s$ and $g(s) = \softmax(z(s))$, both in $\mathbb{R}^L$. 
Then the Jacobian is
\begin{equation}
  J(s) = 
  W_{PV} X \frac{\partial g} {\partial s} = W_{PV} X \left( \text{diag}(g) - g g^T \right) X^T W_{KQ}.
\end{equation}

Observe that $Y = X \left( \text{diag}(g) - g g^T \right) X^T$ is symmetric (keeping in mind that $g(s)$ depends on $W_{KQ}$). The Jacobian symmetry requirement $J=J^T$ therefore places the following constraint on feasible $W_{KQ}, W_{PV}$:
\begin{equation}
  W_{PV} \,Y \,W_{KQ}^T = W_{KQ} \,Y \, W_{PV}^T.
\end{equation}

It is clear that this condition holds for the scaled identity attention weights discussed in the main text. Potentially, it could allow for more general weights that might arise from non-isotropic denoising tasks to be cast as gradient descent updates.

The mapping discussed in the main text involves discrete gradient descent steps, Eq. (\ref{eq:DAM-GD-update}). In general, this update rule retains a ``residual" term in $s(t)$ if we choose a different descent step size $\gamma \neq \alpha$. Thus, taking $K$ recurrent updates could be viewed as the depthwise propagation of query updates through a $K$-layer architecture if one were to use tied weights. Analogous residual streams are commonly utilized in more elaborate transformer architectures to help propagate information to downstream attention heads.

\end{document}